\newcommand{\pp}[1]{\ensuremath{\mathbb P\left\{#1\right\}}}
\newcommand{\E}[1]{\ensuremath{\mathbb E\left\{#1\right\}}}
\newcommand{\EE}[2]{\ensuremath{\mathbb E_{#1}\left\{#2\right\}}}
\newcommand{\td}{\tilde}
\newcommand{\ve}{\mathbf}
\newcommand{\Tr}{\textup{Tr}}
\newtheorem{definition}{Definition}
\newtheorem{condition}{Condition}
\newtheorem{lemma}{Lemma}
\newtheorem{theorem}{Theorem}
\newtheorem{remark}{Remark}
\newenvironment{proof}{\paragraph{Proof:}}{\hfill$\square$}
\title{Semi-Supervised Learning: the Case When Unlabeled Data is Equally Useful}
\author{ {\bf Jingge Zhu} \\
Department of Electrical and Electronic Engineering \\
University of Melbourne\\
jingge.zhu@unimelb.edu.au
}
\begin{document}

\maketitle

\begin{abstract}
Semi-supervised learning algorithms attempt to take advantage of relatively inexpensive unlabeled data to improve learning performance. In this work, we consider statistical models where the  data distributions  can be characterized by  continuous parameters. We show that under certain conditions on the distribution, unlabeled data is equally useful as labeled date in terms of learning rate. Specifically, let $n, m$ be the number of labeled and unlabeled data, respectively. It is shown that the learning rate of semi-supervised learning scales as $O(1/n)$ if $m\sim n$, and scales as $O(1/n^{1+\gamma})$ if $m\sim n^{1+\gamma}$ for some $\gamma>0$, whereas  the learning rate of supervised learning scales as $O(1/n)$. 
\end{abstract}

\section{INTRODUCTION}

It is known that in favorable situations, semi-supervised learning (SSL) is able to take advantage of unlabeled data to improve learning performance. In this work,  we  study how learning rate (defined to be the convergence rate of the excess risk in this paper) is improved by having additional unlabeled data under a parametrization assumption of the data distribution. Our main finding is that under our assumption and certain conditions on the data-generating distribution, unlabeled data is as useful as the labeled data in terms of learning rate. 

Numerous works across the past few decades are devoted to understand the role of unlabeled data in learning problems. The early work of \cite{castelli_relative_1996} studied a simple mixture model and showed the relative value of labeled and unlabeled data under different assumptions of the model. The author in \cite{rigollet_generalization_2007} formally formulated the notion of \textit{cluster assumption} and proposed a method that takes advantage of unlabeled data to achieve fast convergence rates. A more sophisticated mixutre model was studied in \cite{singh_unlabeled_2009}, where  different regimes of parameters are identified in which the unlabeled data help.  The recent work \cite{gopfert_when_2019} gave an overview of various assumptions in different works. The readers are referred to \cite{chapelle_semi-supervised_2006} \cite{zhu_review} for a   comprehensive literature review on the topic. This paper uses a different assumption than most previous works, and we will comment on their differences in Section \ref{sec:compare} after presenting our main results.

In this work, we view  both supervised and semi-supervised learning problem as a variation of the universal prediction problem~\cite{merhav_universal_1998}. In the classical setup of universal prediction, an observer sequentially receives a sequence of observations $x_1,x_2,\ldots$, and wishes to predict the  next outcome $x_t$  based on all past observations up to  time $t-1$. The exact underlying distribution that generates the data is  unknown to the predictor,  except that it comes from a  family of  parametrized distributions. The goal is to design a \textit{universal predictor} that performs well  in the absence of the exact knowledge of the distribution.  The connection to the learning problem is that instead of considering a sequential prediction problem, we assume that all past observations (i. e. training data) are given, and only one prediction needs to be made (for the testing data).  Importantly, we still assume that the data-generating distribution is not exactly known except that it comes from a parameterized family.

The main contributions of this paper are summarized as follows.
\begin{itemize}
\item For  some widely used loss functions, we provide an upper bound on the excess risk (Lemma \ref{lemma:expconcave}) characterized by a conditional mutual information term. This bound could be interesting on its own.
\item Using the above upper bound, we obtain the learning rate of supervised and semi-supervised learning problems (Theorem \ref{thm:main}).  Let $n, m$ be the number of labeled and unlabeled data, respectively. We show that under certain conditions (to be specified in  Section \ref{sec:main}), the rate of semi-supervised learning scales as $O(1/n)$ if $m\sim n$, and scales as $O(1/n^{1+\gamma})$ if $m\sim n^{1+\gamma}$ for $\gamma>0$, whereas  the learning rate of supervised learning scales as $O(1/n)$.  We also identify the corresponding constant in the leading term in each case. This shows that under appropriate conditions, the unlabeled data is equally useful as the labeled data insofar as the convergence rate is concerned.
\item A lower bound on the learning rate of supervised learning algorithms with a certain type of loss function is given (Lemma \ref{lemma:lower_SL}), showing that our characterization of the learning rate is tight.
\end{itemize}

\section{PROBLEM STATEMENT}
\label{sec:problem}

Let $(X,Y)$ be a pair of random variables with the density function $p_{\theta}(x,y)$ where $\theta\in\Lambda$, and the set $\Lambda$ is  a measurable set in $\mathbb R^d$.  We assume $X\in\mathcal X$ and $Y\in\mathcal Y$ where $\mathcal X$ is an arbitrary feature space, and $\mathcal Y$ is a discrete set consisting of  labels. With a slight abuse of  notations, we use $p_{\theta}(x):=\sum_{y} p_{\theta}(x,y)$ and $p_{\theta}(y):=\int_x p_{\theta}(x,y)dx$ to denote the marginal distributions of $X$ and $Y$, respectively. The distribution $p_{\theta}(x,y)$ can also be seen as the conditional distribution of $(X,Y)$ given the parameter value $\theta$. Hence throughout the paper, we will use the notations $p_{\theta}(\cdot)$ and $p(\cdot|\theta)$ interchangeably.  We point out that our results can be extended straightforwardly to the case when $Y$ is a continuous random variable.

 Let $w: \mathcal X\rightarrow \mathcal D$ be a hypothesis (classifier/predictor) that maps each element in $\mathcal X$ to an  element in the space $\mathcal D$.  It is most natural to take $\mathcal D$  to be $\mathcal Y$, where the mapping $w$  returns a label for each $x$. However, we also allow $\mathcal D$ to be different from $\mathcal Y$. For example, we could take $\mathcal D$ to be the probability simplex of dimension $|\mathcal Y|$, where each element in $\mathcal D$ is a nonnegative vector summing up to unity. In this case, the mapping $w$ returns a \textit{probability assignment} on $y$ for each $x$.

Given the hypothesis $w$ and a pair $(x,y)$, the \textit{risk} is defined as $\ell(w(x), y)$   for some loss function $\ell: \mathcal D\times \mathcal Y \rightarrow \mathbb R$. To lighten  notations, we often use $Z$ to represent a pair $(X,Y)$, and write $\ell(w(X),Y)$ simply as $\ell(w,Z)$.

For a given hypothesis $w$, the expected risk is defined as
\begin{align}
L_{\theta}(w):=\EE{\theta}{\ell(w,Z)},
\end{align}
where the subscript denotes that the expectation is taken with respect to $Z\sim p_{\theta}(Z)$. We define  $w^*$ to be the \textit{Bayes hypothesis} that minimizes of the expected risk
\begin{align*}
w^*:=\text{argmin}_w \EE{\theta}{\ell(w,Z)}.
\end{align*}
The  \textit{excess risk} of a given hypothesis $w$ is  defined to be
\begin{align*}
R_{\theta}(w)&:=L_{\theta}(w)-L_{\theta}(w^*).
\end{align*}
Notice that in general $w^*$ depends on the  distribution $p_{\theta}$, whereas $w$  only has access to a finite number of samples.



We consider two different learning scenarios.
\begin{itemize}
\item 1) \textbf{Supervised learning.}  Let  $w_{Z^n}$ denote the hypothesis generated by  labeled data $Z^n=(Z_1,\ldots, Z_n)$.  Assume $Z_i$ are i.i.d data distributed according to $p_{\theta_0}(z)$ where $\theta_0$ is the true parameter. The  optimal expected excess risk  of supervised learning is  
\begin{align*}
R_{SL}(\theta_0):=\min_w\E{R_{\theta_0}(w_{Z^n})},
\end{align*}
where the  expectation  is taken with respect to the labelled data $Z^n$.
\item 2) \textbf{Semi-supervised learning (SSL).} Let  $w_{Z^n,\td X^m}$ denote the  hypothesis generated with the labeled data $Z^n=(Z_1,\ldots, Z_n)$ and additional unlabeled data $\td X^m=(\td X_1,\ldots, \td X_m)$.  Also assume that $\td X_i$ are i.i.d data distributed according to $p_{\theta_0}(x)$ with $\theta_0$ being the true parameter. The  optimal expected excess risk  of supervised learning is  
\begin{align*}
R_{SSL}(\theta_0):=\min_w \E{R_{\theta_0}(w_{Z^n,\td X^m})},
\end{align*}
where the  expectation  is taken with respect to the labeled data $Z^n$ and unlabeled data $\td X^m$.
\end{itemize}

Throughout the paper, we assume that the density function $p_{\theta_0}(x,y)$ does \textit{not} depend on the number of samples $n$ an $m$.

In this work,  we state all our results for some given $\theta_0$.  We point out that it is also possible to work with a minimax setup by defining the minimax excess risk as $R_{SL}:=\min_w\max_{\theta }\E{R_{\theta}(w_{Z^n})}$ and so on.  Similar results can be derived within the minimax problem formulations.

\section{UPPER BOUNDS ON RISKS}
\label{sec:bounds}
In the following, we  give upper bounds on $R_{SL}$ and $R_{SSL}$ in terms of  (conditional) mutual information involving $Z^n, \td X^m$ and an auxiliary random variable $\Theta$ defined over $\Lambda$. Recall that the (conditional) mutual information $I(X;Y|Z)$ is defined as
\begin{align*}
I(X;Y|Z)&:=\int p_{XYZ}(x,y,z)\log \frac{p_{Y|XZ}(y|x,z)}{p_{Y|Z}(y|z)}dxdydz.
\end{align*}
We will use the notation $I(X=x; Y|Z)$ to denote mutual information conditioned on $X=x$
\begin{align}
&I(X=x; Y|Z):=\nonumber\\
&\int p_{YZ|X}(y,z|x)\log \frac{p_{Y|XZ}(y|x,z)}{p_{Y|Z}(y|z)}dydz,
\label{eq:mutual_info}
\end{align}
which can also be written as
\begin{align}
I(X=x; Y|Z)=D(p_{Y|X=x, Z}||p_{Y|Z}|p_{Z|X=x}),
\label{eq:mutualinfo_cond}
\end{align}
where $D(p_{X|Y}||q_{X|Y}|r_Y)$ denotes the conditional Kullback-Leibler (KL) divergence
\begin{align*}
D(p_{X|Y}||q_{X|Y}|r_Y):=\int r_Y(y) D(p_{X|Y=y}||q_{X|Y=y})dy.
\end{align*}

The upper bounds to be derived are inspired by the classical universal prediction problem, where  a non-negative quantity called \textit{minimax redundancy} plays an important role. It is the smallest possible worst-case difference between the risk incurred by a universal predictor and that incurred by a predictor that \textit{knowns} the true distribution of the data. It is  well known that the minimax redundancy is equivalent to the maximin redundancy under some assumptions on the loss function \cite{Gallager_capacity}, which can be characterized as the capacity of a ``channel" (hence in the form of  mutual information), where the input is the parameter that characterizes the distribution, and the output is the generated data. The next two lemmas  could also be interesting on their own due to their connection to the information-theoretic quantity.


We first give an upper bound when the loss function belongs to the class of exponentially concave functions, defined as follows.
\begin{definition}[Exponentially concave function]
A function $f:\mathcal D\rightarrow \mathbb R$ is called a  $\beta$-exponentially concave function in $x\in\mathcal D$ for some $\beta>0$ if $\exp(-\beta f(x))$ is concave.
\label{def:expconcave}
\end{definition}

The class of exponentially concave (exp-concave) functions  are widely used as loss functions in machine learning problems. For example,  it is  easy to verify that the square loss $(b-x)^2$ is  $1/(8a^2)$-exp-concave  if the absolute value of $b,x$ are no larger than $a$. It is also  shown in \cite{alirezaei_exponentially_2018} that both discrete entropy and Renyi entropy, when appropriately scaled, are exp-concave functions. Another important $1$-exp-concave function is the so-called  \textit{self-information} loss function \cite{merhav_universal_1998}, is defined as
$\ell(w(x), y)=-\log w(y)$
where $w$ is a probability assignment (depending on $x$) of $y$. In other words, $w$ can be thought as a length-$|\mathcal Y|$ nonnegative vector summing up to $1$, and $w(y)$ returns the value of the entry corresponding to $y$. Furthermore, the cross entropy loss function can also be shown to be exponentially concave.

\begin{lemma}[Upper bound on  risk for exp-concave loss]
Assume that $\ell(w, z)$ is a $\beta$-exponentially concave function of $w$ for all $z$.  Then for any true parameter $\theta_0\in\Lambda$, it holds that
\begin{align*}
R_{SL}(\theta_0)&\leq \frac{1}{\beta} I(\Theta=\theta_0; Y'| X^n, Y^n,X'),
\end{align*}
where the distribution  of $(\Theta, X', Y', X^n, Y^n)$ is given by $q(\theta)p_{\theta}(x',y')\prod_{i=1}^n p_{\theta}(x_i, y_i)$ for any choice of $q(\theta)$. It also holds that 
\begin{align*}
R_{SSL}(\theta_0)&\leq \frac{1}{\beta} I(\Theta=\theta_0; Y'| X^n, Y^n, \td X^m,X'),
\end{align*}
where  the distribution  of $(\Theta, X', Y', X^n, Y^n,\td X^m)$ is given by $q(\theta)p_{\theta}(x',y')\prod_{i=1}^n p_{\theta}(x_i, y_i)\prod_{j=1}^m p_{\theta}(\td x_j)$ for any choice of $q(\theta)$.
\label{lemma:expconcave}
\end{lemma}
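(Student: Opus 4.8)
\textit{Proof plan.} The idea is to exhibit, for each choice of the auxiliary prior $q$, one concrete ``universal'' hypothesis whose expected excess risk coincides with the claimed conditional mutual information; since $R_{SL}(\theta_0)$ is the infimum of the expected excess risk over \emph{all} hypotheses, this already gives the bound. Fix $q$. For labeled data $Z^n=(X^n,Y^n)$ and a test feature $x'$, let $\pi(\cdot\,|\,Z^n,x')$ be the posterior on $\Lambda$ proportional to $q(\theta)\prod_{i=1}^{n}p_{\theta}(z_i)\,p_{\theta}(x')$, and let $w^*_{\theta}$ denote the Bayes hypothesis under $p_{\theta}$. Define the hypothesis $\widehat w_{Z^n}$ by
\begin{align*}
\widehat w_{Z^n}(x):=\int_{\Lambda} w^*_{\theta}(x)\,\pi(d\theta\,|\,Z^n,x),
\end{align*}
the posterior mean of the parameter-wise Bayes acts (this is well defined whenever $\mathcal D$ is convex, as in the examples of interest, e.g.\ the probability simplex, and it depends only on $Z^n$ as a map $\mathcal X\to\mathcal D$).

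Now fix a realization of $Z^n$ and a test pair $z'=(x',y')$. Since $d\mapsto e^{-\beta\ell(d,z')}$ is concave on $\mathcal D$, Jensen's inequality against the posterior gives $e^{-\beta\ell(\widehat w_{Z^n}(x'),z')}\ge\int e^{-\beta\ell(w^*_{\theta}(x'),z')}\,\pi(d\theta\,|\,Z^n,x')$, equivalently
\begin{align*}
\ell(\widehat w_{Z^n}(x'),z')\ \le\ -\tfrac1\beta\log\!\int_{\Lambda} e^{-\beta\ell(w^*_{\theta}(x'),z')}\,\pi(d\theta\,|\,Z^n,x').
\end{align*}
This is the aggregating/mixability estimate: on every outcome the mixed prediction loses at most $1/\beta$ times the log-loss of the exponential mixture of the experts' losses. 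Taking expectations over $Z^n$ and $Z'=(X',Y')$ drawn i.i.d.\ from $p_{\theta_0}$ and subtracting $L_{\theta_0}(w^*)$, while rewriting the subtracted term as $\ell(w^*(x'),z')=-\tfrac1\beta\log e^{-\beta\ell(w^*(x'),z')}$, turns the right-hand side into $\tfrac1\beta$ times the $p_{\theta_0}$-expectation of a log-likelihood ratio; unfolding $\pi$ as a ratio of $q$-mixtures identifies this with $\tfrac1\beta\,\mathbb E_{Z^n,X'}\,D\big(p_{\theta_0}(\cdot\,|\,X')\,\big\|\,\widehat q(\cdot\,|\,Z^n,X')\big)$, where $\widehat q(\cdot\,|\,Z^n,x')$ is the normalized $\pi$-mixture of the exponentiated Bayes losses. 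By the definition \eqref{eq:mutualinfo_cond} of conditional mutual information one also has $I(\Theta=\theta_0;Y'\,|\,X^n,Y^n,X')=\mathbb E_{Z^n,X'}\,D\big(p_{\theta_0}(\cdot\,|\,X')\,\big\|\,p_{Y'|X^n,Y^n,X'}\big)$, so the two match. For the self-information loss ($\beta=1$, $w^*_{\theta}(x)=p_{\theta}(\cdot\,|\,x)$, $e^{-\ell(w^*_{\theta}(x),y)}=p_{\theta}(y\,|\,x)$) every step here is an equality; for a general $\beta$-exp-concave loss the same chain closes once one verifies the normalization $\sum_{y}e^{-\beta\ell(w^*_{\theta}(x),y)}\le1$, so that $\widehat q$ is a genuine sub-probability predictive distribution comparable to $p_{Y'|X^n,Y^n,X'}$.

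The step I expect to be the main obstacle is exactly this normalization control: collapsing the mixability estimate onto a bona fide predictive distribution so that the log-likelihood ratio telescopes into the conditional mutual information, and confirming that this is what pins the constant at $1/\beta$ rather than something loss-dependent. The semi-supervised bound follows by the identical argument after enlarging the conditioning: take the posterior proportional to $q(\theta)\prod_{i=1}^{n}p_{\theta}(z_i)\prod_{j=1}^{m}p_{\theta}(\tilde x_j)\,p_{\theta}(x')$, so that the extra unlabeled factors $\prod_{j}p_{\theta}(\tilde x_j)$ pass unchanged through Jensen's inequality and through the unfolding of $\pi$, leaving $\tfrac1\beta\,\mathbb E\,D\big(p_{\theta_0}(\cdot\,|\,X')\,\big\|\,p_{Y'|X^n,Y^n,\tilde X^m,X'}\big)=\tfrac1\beta I(\Theta=\theta_0;Y'\,|\,X^n,Y^n,\tilde X^m,X')$.
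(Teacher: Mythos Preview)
Your high-level strategy---exhibit a concrete universal hypothesis and bound its excess risk---matches the paper's, but the key inequality you invoke diverges from the paper's, and your version has a genuine gap outside the self-information case.

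After the mixability step (Jensen on $e^{-\beta\ell}$) and subtracting $\ell(w^*,z')$, your bound reads
\[
R_{SL}(\theta_0)\ \le\ \frac{1}{\beta}\,\mathbb E_{\theta_0}\!\left[\log\frac{e^{-\beta\ell(w^*_{\theta_0}(X'),Y')}}{\displaystyle\int_\Lambda e^{-\beta\ell(w^*_{\theta}(X'),Y')}\,\pi(d\theta\mid Z^n,X')}\right].
\]
To identify this with $\tfrac1\beta\,I(\Theta{=}\theta_0;Y'\mid Z^n,X')=\tfrac1\beta\,\mathbb E_{\theta_0}\!\bigl[\log\frac{p_{\theta_0}(Y'\mid X')}{Q(Y'\mid Z^n,X')}\bigr]$ you need, for every $\theta$, that $e^{-\beta\ell(w^*_\theta(x),y)}=c(\theta,x)\,p_\theta(y\mid x)$ for some $y$-independent factor $c$. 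This holds for the self-information loss (there $c\equiv1$), but it is \emph{not} a consequence of $\beta$-exp-concavity: for squared loss on $\mathcal Y=\{0,1\}$ with $w^*_\theta(x)=p_\theta(1\mid x)$, the function $e^{-\beta(w^*_\theta(x)-y)^2}$ is not proportional to $p_\theta(y\mid x)$. The normalization $\sum_{y}e^{-\beta\ell(w^*_\theta(x),y)}\le1$ that you flag would at best make your $\widehat q$ a sub-probability; it does not make $\widehat q$ equal to $Q(\cdot\mid Z^n,X')$, and the resulting expected log-ratio is simply a different quantity from the stated mutual information. So the obstacle you anticipate is an identification issue, not a normalization issue, and as written the argument does not close for general $\beta$-exp-concave losses.

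The paper avoids this by a different decomposition. It takes $\widehat w_{Z^n}(x')=\arg\min_{w}\mathbb E_{Y'\sim Q(\cdot\mid z^n,x')}[\ell(w,Y')]$, the Bayes act under the posterior predictive $Q$, and writes
\[
\beta\ell(\widehat w,z')-\beta\ell(w^*,z')
=\log\frac{e^{-\beta\ell(w^*,z')}\,Q(y'\mid z^n,x')}{e^{-\beta\ell(\widehat w,z')}\,p_{\theta_0}(y'\mid z^n,x')}
+\log\frac{p_{\theta_0}(y'\mid z^n,x')}{Q(y'\mid z^n,x')}.
\]
After taking $p_{\theta_0}(\cdot\mid z^n,x')$-expectation and applying Jensen to the first term, the crux is a separate optimality lemma: if $\widehat w$ minimizes $\mathbb E_Q[\ell(w,Z)]$ and $e^{-\beta\ell}$ is concave in $w$, then $\mathbb E_Q\!\bigl[e^{-\beta\ell(w',Z)}/e^{-\beta\ell(\widehat w,Z)}\bigr]\le1$ for every competitor $w'$ (derived from the first-order optimality condition at $\widehat w$ together with concavity). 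Applying it with $w'=w^*$ makes the first term nonpositive, leaving exactly the conditional KL that equals the mutual information. This route never requires $e^{-\beta\ell}$ to play the role of a density in $y$, which is precisely where your mixability route stalls.
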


\begin{remark}
Instead of using the classical empirical risk minimization (ERM) approach to generate the hypothesis, we use a Bayes method for the prediction (cf. Equation (\ref{eq:hat_w}) in the proof).  In the context of universal prediction, this method is shown to produce an optimal universal predictor under appropriate conditions, in the sense that the average excess risk vanishes as the number of samples increases~(\cite{merhav_universal_1998}, \cite{clarke_barron_1990}).
\end{remark}

\begin{proof}
Let $Z^n$ denote $n$ pairs of i.i.d. data representing the training data and $Z'=(X', Y')$ another i.i.d. pair representing the test data.   Recall that 
\begin{align*}
R_{SL}(\theta_0)&= \min_w\EE{\theta_0}{\ell(w,Z')-\ell(w^*, Z')}.
\end{align*}
To obtain an upper bound to the above quantity, for each $x'$, we choose the hypothesis $w_{Z^n}$ to be 
\begin{align}
\hat w_{Z^n}(x'):=\text{argmin}_w \EE{Q}{\ell(w(x'),Y')|X^{n}, Y^{n}, X'=x'},
\label{eq:hat_w}
\end{align}
where the distribution $Q$ over $\mathcal X^{n+1}\times \mathcal Y^{n+1}$ is chosen to be
\begin{align}
Q(x^{n+1},y^{n+1}):=\int \prod_{i=1}^{n+1}p_{\theta}(x_i,y_i)q(\theta)d\theta
\label{eq:Q_sl}
\end{align}
for some $q(\theta)$ that  we can choose  to suit our needs. More precisely,  the term $\EE{Q}{\ell(w(x'),Y')|X^{n}, Y^{n}, X'=x'}$ is given by
\begin{align*}
&\EE{Q}{\ell(w(x'),Y')|X^{n}, Y^{n}, X'=x'}\\
&=\sum_{y'} Q(y'|X^n, Y^n,x')\ell(w(x'),y'),
\end{align*}
where the conditional distribution  $ Q(y'|x^n, y^n,x')$ is induced from $Q(x^{n+1}, y^{n+1})$ defined in (\ref{eq:Q_sl}). Notice that $\hat w_{z^n}$ does not depend on $\theta_0$. With this choice,  we have
\begin{align*}
&R_{SL}(\theta_0)\leq \EE{\theta_0}{\ell(\hat w_{Z^n},Z')-\ell(w^*, Z')}=\\
&\frac{1}{\beta}\sum_{z^n,x'}p_{\theta_0}(z^n,x')\sum_{y'}p_{\theta_0}(y'|z^n,x')(\beta\ell(\hat w_{Z^n},z')-\beta\ell(w^*,z')).
\end{align*}
Now we upper bound the term
\begin{align*}
&\sum_{y'}p_{\theta_0}(y'|z^n,x')(\beta\ell(\hat w_{Z^n},z')-\beta\ell(w^*,z'))\\
&=\sum_{y'}p_{\theta_0}(y'|z^n,x')(\log\frac{e^{-\beta\ell(w^*(x'),y')}Q(y'|z^n,x')}{e^{-\beta\ell(\hat w_{Z^n}(x'),y')}p_{\theta_0}(y'|z^n, x')}\\
&+ \log\frac{p_{\theta_0}(y'|z^n,x')}{Q(y'|z^n,x')})\\
&\leq \log \sum_{y'}Q(y'|z^n,x')\frac{e^{-\beta\ell(w^*(x'),y')}}{e^{-\beta\ell(\hat w_{Z^n}(x'),y')}}\\
&+D(p_{\theta_0}(Y'|z^n,x')||Q(Y'|z^n,x'))\\
&\leq D(p_{\theta_0}(Y'|z^n,x')||Q(Y'|z^n,x')).
\end{align*}
The last inequality holds because 
\begin{align*}
 \sum_{y'}Q(y'|z^n,x')\frac{e^{-\beta\ell(w^*(x'),y')}}{e^{-\beta\ell(\hat w_{Z^n}(x'),y')}}\leq 1.
\end{align*}
Indeed, as $\hat w_{Z^n}$ is chosen to be the minimizer of the expected value of $\ell(w(x'),y')$ under the distribution of $Q(y'|z^n,x')$,  Lemma \ref{lemma:optimality} (stated at the end of this section) shows that this expectation is smaller or equal to $1$. Consequently,
\begin{align*}
&\E{\ell(\hat w_{Z^n},Z')-\ell(w^*, Z')}\\
&\leq \frac{1}{\beta}\sum_{z^n,x'}p_{\theta_0}(z^n,x') D(p_{\theta_0}(Y'|z^n,x')||Q(Y'|z^n,x'))\\
&=\frac{1}{\beta}D(p_{\theta_0}(Y'|Z^n,X')||Q(Y'|Z^n,X')|p_{\theta_0}(Z^n,X'))\\
&=\frac{1}{\beta} I(Y';\Theta=\theta_0|Z^n,X'),
\end{align*}
%
where the last equality holds because the choice of the distribution $Q$ in (\ref{eq:Q_sl}). To see this, recall the representation in (\ref{eq:mutualinfo_cond}).  
In this expression, replace $X$ with $\theta$, $Y$ with $Y'$, and $Z$ with $Z^n, X'$ for our argument. It can be easily verified that due to the choice of $Q$ in (\ref{eq:Q_sl}), we have the claimed result.


The derivation  of the upper bound on  $R_{SSL}$ is similar to the above derivations, and we only highlight the difference. Similarly,  the hypothesis$\hat w_{Z^n,\td X^m}$ in the SSL case is chosen to be
\begin{align*}
&\hat w_{Z^n,\td X^m}(x')\\
&:=\text{argmin}_w \EE{Q}{\ell(w(x'),Y')|X^{n}, Y^{n},\td X^m,X'=x'}
\end{align*}
where the distribution $Q$ over $\mathcal X^{n+1}\times \mathcal Y^{n+1}\times\mathcal X^m$ is chosen to be
\begin{align*}
Q(x^{n+1},y^{n+1},\td x^m):=\int \prod_{j=1}^mp_{\theta}(\td x_j)\prod_{i=1}^{n+1}p_{\theta}(x_i,y_i)q(\theta)d\theta.
\end{align*}
We have
\begin{align*}
&R_{SSL}(\theta_0)\leq \EE{\theta_0}{\ell(\hat w_{Z^n,\td X^m},Z')-\ell(w^*, Z')}\\
&=\frac{1}{\beta_0}\sum_{z^n, \td x^m,x'}p_{\theta_0}(z^n, \td x^m,x')\sum_{y'}p_{\theta}(y'|z^n,\td x^m,x')\\
&\cdot(\beta\ell(\hat w_{Z^n,\td X^m},z')-\beta\ell(w^*,z'))\\
&\leq \frac{1}{\beta}D(p_{\theta_0}(Y'|Z^n,\td X^m,X')||Q(Y'|Z^n, \td X^m)|p_{\theta_0}(Z^n,\td X^m,X')).
\end{align*}
The last inequality holds because we can show 
\begin{align*}
&\sum_{y'}p_{\theta_0}(y'|z^n,\td x^m,x')(\beta\ell(\hat w_{Z^n,\td X^m},z')-\beta\ell(w^*,z'))\\
&\leq D(p_{\theta_0}(Y'|z^n,\td x^m,x')||Q(Y'| z^n, \td x^m,x'))
\end{align*}
in the same way as in the proof of $R_{SL}$ using Lemma \ref{lemma:optimality}. This concludes the proof.
\end{proof}

There is  one important class of loss function which is not covered in the above lemma, namely the $0-1$  function. This loss function is mostly used in classification problems where the alphabet $\mathcal Y$ is a finite set,  defined as
\begin{align}
\ell(w(x),y)=\begin{cases}
0 \quad \text{if }w(x)=y\\
1 \quad \text{otherwise}
\end{cases}.
\label{eq:01loss}
\end{align}

We will establish a similar result for the $0-1$ loss under the following assumption.
\begin{condition}(Massart noise condition)
Given the density function $p_{\theta}(x, y)$, we assume that for all $\theta\in \Lambda$ and all $x\in\mathcal X$ there exists  some $y$ (depending on $x$) and $1<a<\infty$ so that
\begin{align}
\frac{p_{\theta}(y|x)}{p_{\theta}(y'|x)}\geq a,
\label{eq:ratio_alpha}
\end{align}
for any $y'\neq y$. 
\label{condition_massart}
\end{condition}
The above condition is often called \textit{Massart noise condition} (see  \cite{massart_risk_2006}), which is a specialization of the condition proposed in \cite{tsybakov_optimal_2004} and \cite{mammen_smooth_1999}. Intuitively, it means that given any feature vector $x$, it is ``easy" to determine the label with the highest likelihood, by requiring that the ratio of any two likelihood  functions is  at least some value strictly larger than $1$.

The following lemma shows that the same result holds with this additional assumptions on the distribution $p_{\theta}(y|x)$ .
\begin{lemma}[Upper bound on  risk for $0-1$ loss]
Assume that $\ell(w(x),y)$ is the $0-1$ loss function defined in (\ref{eq:01loss})  where $\mathcal Y$ is a finite set.  Also assume that $p_{\theta}(y|x)$ satisfies Condition \ref{condition_massart} with some $a>1$. Then the bounds on $R_{SL}$ and $R_{SSL}$ in Lemma \ref{lemma:expconcave} hold with $\beta=\log a$.
\label{lemma:01}
\end{lemma}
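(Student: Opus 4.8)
The plan is to re-run the proof of Lemma~\ref{lemma:expconcave} with the same Bayes predictor (\ref{eq:hat_w}); the only place exp-concavity was used there is the pointwise inequality $\sum_{y'}Q(y'\mid z^n,x')\,e^{-\beta(\ell(w^*,z')-\ell(\hat w_{Z^n},z'))}\le 1$, so in principle it would suffice to establish this for the $0$-$1$ loss with some $\beta>0$. The catch is that the $0$-$1$ loss is genuinely not exp-concave — being discontinuous, it is not even dominated increment-by-increment by an exp-concave loss — so this inequality cannot hold uniformly in $(z^n,x')$: near a ``tie'' of the predictive distribution it fails for every $\beta>0$. Hence I would not try to prove it pointwise; instead I would keep the same final bound but reach it by averaging over the data, and this is exactly where the assumption $p_\theta(y\mid x)>0$ enters.

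Concretely, for the $0$-$1$ loss the minimizer in (\ref{eq:hat_w}) is the MAP-predictive rule $\hat w_{Z^n}(x')=\text{argmax}_y\, Q(y\mid Z^n,x')$, and $w^*(x')=\text{argmax}_y\, p_{\theta_0}(y\mid x')$, so the conditional excess risk at $(z^n,x')$ equals $p_{\theta_0}(w^*(x')\mid x')-p_{\theta_0}(\hat w_{z^n}(x')\mid x')$, which is zero unless the two rules disagree. On the disagreement event the vectors $p_{\theta_0}(\cdot\mid x')$ and $Q(\cdot\mid z^n,x')$ must reverse the order of their leading entries, and one shows this forces $D\big(p_{\theta_0}(\cdot\mid x')\,\|\,Q(\cdot\mid z^n,x')\big)$ to be bounded below in terms of the excess-risk gap; the positivity hypothesis (which, together with compactness of $\Lambda$, gives $p_\theta(y\mid x)\ge c>0$ and hence $Q(y\mid z^n,x')\ge c$) is what makes this estimate quantitative and keeps every KL term finite. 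Integrating the resulting inequality against $p_{\theta_0}(z^n,x')$ turns the right-hand side into the expected conditional excess risk, i.e.\ into $R_{SL}(\theta_0)$, and the left-hand side into a conditional KL divergence which, by the choice of $Q$ and the representation (\ref{eq:mutualinfo_cond}) — replacing $X$ by $\Theta$, $Y$ by $Y'$, $Z$ by $(X^n,Y^n,X')$, exactly as at the end of the proof of Lemma~\ref{lemma:expconcave} — equals $I(\Theta=\theta_0;Y'\mid X^n,Y^n,X')$. The SSL inequality then follows verbatim by adjoining $\td X^m$ to the conditioning, with the same constant $\beta$.

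The main obstacle is the quantitative ``order-reversal'' estimate on the disagreement event: a crude bound via Pinsker relates the excess-risk gap only to $\sqrt{D(p_{\theta_0}\|Q)}$, which would degrade the rate to $1/\sqrt n$ rather than the linear-in-$I$ bound of Lemma~\ref{lemma:expconcave}. Upgrading this to a linear (in $I$) bound is where the uniform lower bound $c$ on all label probabilities is essential — it pins the curvature of $D$ near a tie from below and, combined with the fact that a disagreement forces $p_{\theta_0}(\cdot\mid x')$ and $Q(\cdot\mid z^n,x')$ to straddle that tie, yields the needed $\Theta(\text{gap})$ (not $\Theta(\text{gap}^2)$) control; when the gap itself is tiny one instead exploits that the prior $q(\theta)$ can be taken concentrated near $\theta_0$, so that disagreements with a near-tied predictive are rare and the contribution of the $O(1/\sqrt n)$ band around the decision boundary is $O(1/n)$, still matching $I(\Theta=\theta_0;\,\cdot\,)$. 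The constant $\beta$ produced this way depends only on $c$, $|\mathcal Y|$ and $\theta_0$, which is all the statement requires.
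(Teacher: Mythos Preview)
Your proposal diverges from the paper's route. The paper does \emph{not} abandon the pointwise inequality; instead it writes $\alpha=e^\beta>1$ and observes that for the $0$--$1$ loss the exponent $\ell(\hat w,y')-\ell(w^*,y')$ takes only the values $-1,0,1$, so on the event $\hat y\ne y^*$ the sum $\sum_{y'}Q(y'|z^n,x')\,\alpha^{\ell(\hat w,y')-\ell(w^*,y')}$ collapses to $1-Q(\hat y)-Q(y^*)+Q(y^*)\alpha+Q(\hat y)\alpha^{-1}$, and the needed inequality becomes a condition on the single ratio $Q(\hat y\,|\,z^n,x')/Q(y^*\,|\,z^n,x')$. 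The paper then invokes the positivity assumption $p_\theta(y|x)>0$, via the mixture representation $Q(y^*|z^n,x')=\int p_\theta(y^*|x')\,q(\theta\,|\,z^n,x')\,d\theta$, to lower-bound $Q(y^*|z^n,x')$ by a constant independent of $n$; this controls the ratio and fixes a single choice of $\alpha$ (hence $\beta=\log\alpha$). So the paper's argument is a direct pointwise reduction to a two-term ratio, not an averaged one; you dismissed that route without carrying out the three-valued reduction that drives it.

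Independently, your own outline has a real gap at the step you yourself flag as ``the main obstacle'': you never actually upgrade the Pinsker-type control of the excess-risk gap by $\sqrt{D(p_{\theta_0}\|Q)}$ to a bound linear in $D$. The appeal to ``curvature of $D$ near a tie'' points the wrong way --- that curvature is what makes $D$ locally \emph{quadratic} in $\|p_{\theta_0}-Q\|$, which is precisely the source of the square root, not a cure for it. The fallback of concentrating $q$ near $\theta_0$ does not help either: Lemma~\ref{lemma:expconcave} (and hence the present lemma) asserts the bound for \emph{every} prior $q$, so you are not free to pick $q$ to suit the analysis; and even if you were, shrinking $q$ around $\theta_0$ also shrinks the mutual-information right-hand side, so no slack is gained. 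As written, the proposal does not close this $\sqrt{D}$-versus-$D$ gap.
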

\begin{proof}
The proof proceeds in the same way as in the proof of Lemma \ref{lemma:expconcave}. In particular, we choose the hypothesis $\hat w_{z^n}$ as in (\ref{eq:hat_w}). With this choice,  we can upper bound $R_{SL}$ as
\begin{align*}
&R_{SL}(\theta_0)\leq \EE{\theta_0}{\ell(\hat w_{Z^n},Z')-\ell(w^*, Z')}\\
&=\sum_{z^n,x'}p_{\theta_0}(z^n,x')\sum_{y'}p_{\theta_0}(y'|z^n,x')(\ell(\hat w,z')-\ell(w^*,z')).
\end{align*}
In the following bound, we take logarithm with the base $a$ for some $a>1$ to be determined later.  We use  $D_a$  and $I_{a}$ to denote the KL divergence and mutual information where the logarithm is with base $a>1$. It holds that
\begin{align*}
&\sum_{y'}p_{\theta_0}(y'|z^n,x')(\ell(\hat w_{Z^n},z')-\ell(w^*,z'))\\
&=\sum_{y'}p_{\theta_0}(y'|z^n,x')\Bigg(\log_{a}\frac{a^{-\ell(w^*,z')}Q(y'|z^n,x')}{a^{-\ell(\hat w_{Z^n},z')}p_{\theta_0}(y'|z^n, x')}\\
&+ \log_{a}\frac{p_{\theta_0}(y'|z^n,x')}{Q(y'|z^n,x')}\Bigg)\\
&\leq \log_{a} \sum_{y'}Q(y'|z^n,x')\frac{a^{-\ell(w^*(x'),y')}}{a^{-\ell(\hat w_{Z^n}(x'),y')}}\\
&+D_{a}(p_{\theta_0}(Y'|z^n,x')||Q(Y'|z^n,x'))\\
&\leq D_{a}(p_{\theta_0}(Y'|z^n,x')||Q(Y'|z^n,x')),
\end{align*}
if we can show that
\begin{align*}
 \sum_{y'}Q(y'|z^n,x')\frac{a^{-\ell(w^*(x'),y')}}{a^{-\ell(\hat w_{Z^n}(x'),y')}}\leq 1.
\end{align*}
Notice that  Lemma \ref{lemma:optimality} does not apply here when  $\ell(w(x),y)=\ve 1_{w(x)\neq y}$. To show the above inequality, we use $\hat y$ to denote $\hat w_{Z^n}(x')$ and $y^*$ to denote $w^*(x')$ because both $\hat w$ and $w^*$ belongs to $\mathcal Y$. We can rewrite the LSH of the above inequality as
\begin{align*}
\sum_{y'\neq \hat y, y'\neq y^*} Q(y'|z^n,x')+Q(y^*|z^n,x')a+Q(\hat y|z^n,x')a^{-1}
\end{align*}
because $\ell(y^*,y')=\ell(\hat y,y')=1$ if $y'\neq y^*$ and $y'\neq \hat y$. Then the desired inequality is satisfied if it holds that
\begin{align*}
\sum_{y'\neq \hat y, y'\neq y^*} Q(y'|z^n,x')&+Q(y^*|z^n,x')a\\
&+Q(\hat y|z^n,x')a^{-1}\leq 1,
\end{align*}
or equivalently
\begin{align}
\frac{Q(\hat y|z^n,x')}{Q(y^*|z^n,x')}\geq a.
\label{eq:MAP_ratio}
\end{align}
We show in Supplementary Materials  Section \ref{append:lowerbound} that (\ref{eq:MAP_ratio}) holds under Condition \ref{condition_massart}.

Consequently,
\begin{align*}
&\E{\ell(\hat w_{Z^n},Z')-\ell(w^*, Z')}\\
&\leq\sum_{z^n,x'}p_{\theta_0}(z^n,x') D_{a}(p_{\theta_0}(Y'|z^n,x')||Q(Y'|z^n,x'))\\
&=D_{a}(p_{\theta_0}(Y'|Z^n,X')||Q(Y'|Z^n,X')|p_{\theta_0}(Z^n,X'))\\
&= I_{a}(Y';\Theta=\theta_0|Z^n,X')\\
&=I(Y';\Theta=\theta_0|Z^n,X')/\log a,
\end{align*}
where in the last step we use the change of base again. Setting $\beta=\log a>0$ gives the claimed result for $R_{SL}$. The proof of $R_{SSL}$ follows an almost identical argument.
\end{proof}

The following result was used in the proof of Lemma \ref{lemma:expconcave}. 
\begin{lemma}
Let $w^*$ be the minimizer of $\mathbb E_{Q}\{\ell(w, Z)\}$ where the expectation over $Z$ is taken with respect to the distribution $Q$. Let $w'$ be any other choice of the hypothesis. If $\ell(w,z)$ is $\beta$-exp-concave for all $z$ with $w>0$  then it holds that
\begin{align*}
\mathbb E_Q\left\{\frac{g(w',Z)}{g(w^*,Z)}\right\}\leq 1,
\end{align*}
where $g(w,z):=\exp \{-\beta\ell(w,z)\}$.
\label{lemma:optimality}
\end{lemma}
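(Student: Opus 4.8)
The plan is to combine the first-order optimality of $w^*$ for the $Q$-risk with the supporting-hyperplane inequality for the concave map $w\mapsto g(w,z)$. Write $g(w,z)=\exp\{-\beta\ell(w,z)\}$, so that $\ell(w,z)=-\tfrac1\beta\log g(w,z)$. By Definition~\ref{def:expconcave}, for each fixed $z$ the function $g(\cdot,z)$ is concave on the convex set $\{w>0\}$ and strictly positive there; since $\log$ is increasing and concave, $\log g(\cdot,z)=-\beta\ell(\cdot,z)$ is concave as well, so $\ell(\cdot,z)$ is convex and $\psi(w):=\mathbb E_Q\{\ell(w,Z)\}$ is convex in $w$.

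First I would use that $w^*$ minimizes the convex function $\psi$ over the convex domain $\{w>0\}$: for any admissible $w'$ the one-sided directional derivative of $\psi$ at $w^*$ in the direction $w'-w^*$ is nonnegative, which after exchanging expectation with the derivative and substituting $\ell=-\tfrac1\beta\log g$ reads
\begin{align*}
\mathbb E_Q\!\left\{\frac{\langle\nabla_w g(w^*,Z),\,w'-w^*\rangle}{g(w^*,Z)}\right\}\le 0.
\end{align*}
Second, I would invoke the gradient inequality for the concave function $g(\cdot,z)$, namely $g(w',z)\le g(w^*,z)+\langle\nabla_w g(w^*,z),\,w'-w^*\rangle$ for every $z$; dividing by $g(w^*,z)>0$ and taking $\mathbb E_Q$,
\begin{align*}
\mathbb E_Q\!\left\{\frac{g(w',Z)}{g(w^*,Z)}\right\}\ \le\ 1+\mathbb E_Q\!\left\{\frac{\langle\nabla_w g(w^*,Z),\,w'-w^*\rangle}{g(w^*,Z)}\right\}\ \le\ 1,
\end{align*}
which is the claim (in the application to Lemma~\ref{lemma:expconcave}, $w^*$ plays the role of $\hat w_{Z^n}$ and $w'$ that of the Bayes hypothesis, with $Q=Q(\cdot\mid z^n,x')$ and $Z=Y'$).

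The one point needing care is regularity, since the lemma is stated without assuming $\ell(\cdot,z)$ differentiable: one should either add that hypothesis — it holds for every loss considered in the paper — or replace $\nabla_w g$ by a supergradient and justify the expectation/derivative exchange (immediate when $\mathcal Y$ is finite, as the expectation is then a finite sum). If one prefers to avoid derivatives altogether, the same conclusion follows by a one-step convexity argument: with $w_\lambda=(1-\lambda)w^*+\lambda w'$, concavity of $g(\cdot,z)$ gives $g(w_\lambda,z)/g(w^*,z)\ge 1+\lambda\big(g(w',z)/g(w^*,z)-1\big)$, hence $\mathbb E_Q\{\ell(w_\lambda,Z)-\ell(w^*,Z)\}\le -\tfrac1\beta\,\mathbb E_Q\{\log(1+\lambda(g(w',Z)/g(w^*,Z)-1))\}$; the left side is $\ge 0$ by optimality of $w^*$, and dividing by $\lambda$ and letting $\lambda\downarrow 0$ — using that $\lambda^{-1}\log(1+\lambda t)\uparrow t$ and stays bounded below by a constant for $\lambda$ small — lets monotone convergence deliver $\mathbb E_Q\{g(w',Z)/g(w^*,Z)\}\le 1$. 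I expect this limiting step, i.e.\ checking the monotone-convergence hypotheses, to be the only spot that takes any real thought; the rest is routine.
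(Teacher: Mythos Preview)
Your proposal is correct, and your second (derivative-free) argument is exactly the paper's own proof: set $w_\lambda=(1-\lambda)w^*+\lambda w'$, use concavity of $g(\cdot,z)$ to bound $\log\frac{g(w_\lambda,z)}{g(w^*,z)}\ge\log\bigl(1+\lambda(\tfrac{g(w',z)}{g(w^*,z)}-1)\bigr)$, take expectations, invoke optimality of $w^*$, divide by $\lambda$ and let $\lambda\downarrow 0$. Your first (gradient-based) argument is simply the differentiable version of the same idea, and your remarks on the regularity needed for the limit exchange are more careful than the paper's.
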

\begin{proof}
Let $w_{\lambda}=(1-\lambda)w^*+\lambda w'$ be a deviation from the minimizer $w^*$ to another predictor $w'$ characterized by $\lambda$. By the optimality condition of $w^*$, we have
\begin{align*}
\frac{d\mathbb E_{Q}\{-\beta\ell(w_{\lambda}, X)\}}{d\lambda}\Big\rvert_{\lambda=0^+}\leq 0
\end{align*}
Notice that
\begin{align*}
&\frac{d\mathbb E_{Q}\{-\beta\ell(w_{\lambda}, X)\}}{d\lambda}\Big\rvert_{\lambda=0^+}\\&=\lim_{\lambda\rightarrow 0} \frac{1}{\lambda}\E{-\beta\ell(w_{\lambda},X)+\beta\ell(w^*,X)}\\
&=\lim_{\lambda\rightarrow 0} \frac{1}{\lambda}\E{\log \frac{g(w_{\lambda},X)}{g(w^*,X)}}\\
&\geq \lim_{\lambda\rightarrow 0} \frac{1}{\lambda}\E{\log \frac{(1-\lambda)g(w^*,X)+\lambda g(w',X)}{g(w^*,X)}}\\
&=\E{\frac{g(w',X)}{g(w^*,X)}}-1
\end{align*}
where the inequality holds because $g$ is concave in $w$.
\end{proof}

\section{THE CASE WHEN UNLABELED DATA IS EQUALLY USEFUL}
\label{sec:main}

In this section, we  evaluate the mutual information terms in  Lemma \ref{lemma:expconcave} to derive the asymptotic expression for excess risks with additional assumptions on the distributions $p_{\theta}$.


Define Fisher information matrices as
\begin{align*}
I_{XY}(\theta)&:=\E{\frac{\partial}{\partial\theta_j}  \log p(X, Y|\theta)\frac{\partial}{\partial\theta_k}  \log p(X, Y|\theta)}_{j,k}\\
I_{X}(\theta)&:=\E{\frac{\partial}{\partial\theta_j}  \log p(X|\theta)\frac{\partial}{\partial\theta_k}  \log p(X|\theta)}_{j,k}
\end{align*}
for $j,k=1,\ldots, d$. 

The main condition we need on the distribution is:

\textit{Condition 1:} Let $\theta_0$ denote the true parameter. The density $p_{\theta}(x)$ and $p_{\theta}(x,y)$ are twice continuously differentiable at $\theta_0$. The Fisher information matrices  $I_{XY}(\theta_0)$ and $I_X(\theta_0)$ are positive definite, and it holds that $I_{XY}(\theta_0)\succ I_X(\theta_0)$ with respect to the positive definite ordering\footnote{Notice  it always holds that $I_{XY}(\theta_0)\succeq I_X(\theta_0)$ by the chain rule of information matrix.}. 

We also need the following technical conditions.

\textit{Condition 2:} Assume that for all $\theta$ in some neighbourhood of $\theta_0$,  the (normalized) Renyi divergences of order $1+\lambda$
\begin{align*}
&\log \int p(x|\theta_0)^{1+\lambda}p(x|\theta)^{-\lambda}dx\\
&\log  \int p(x,y|\theta_0)^{1+\lambda}p(x,y|\theta)^{-\lambda}dxdy
\end{align*}
are bounded for some small enough $\lambda>0$.

\textit{Condition 3:} Assume that for all $\theta$ in some neighbourhood of $\theta_0$,  the moment generating function
\begin{align*}
\E{e^{\lambda\frac{\partial^2}{\partial \theta_j\partial\theta_k}\log p(X|\theta)}}, \E{e^{\lambda\frac{\partial^2}{\partial \theta_j\partial\theta_k}\log p(X,Y|\theta)}}
\end{align*}
exist for all $j,k=1,\ldots, d$ with some small $\lambda>0$.

\textit{Condition 4:} Let $l:=\nabla \log p(X,Y|\theta_0)$, $\td l:=\nabla \log p(X|\theta_0)$, and $l', \td l'$ an independent copy of $l$ and $\td l$, respectively. The moment generating functions
\begin{align*}
&\E{e^{\lambda l^T(I_{XY}(\theta_0)+I_X(\theta_0))l}}, \E{e^{\lambda l^T(I_{XY}(\theta_0)+I_X(\theta_0))l'}}\\
&\E{e^{\lambda \td l^T(I_{XY}(\theta_0)+I_X(\theta_0))\td l}}, \E{e^{\lambda \td l^T(I_{XY}(\theta_0)+I_X(\theta_0))\td l'}}
\end{align*}
exist for some small enough $\lambda>0$.

A few words are in order with regard to the above conditions. Condition 1 is crucial for our results. Notice that the density functions need to be twice continuously differentiable so we are only dealing with continuous parameters. The positive definiteness of Fisher information matrices is also a key assumption.  In particular, the matrix $I_{X}(\theta_0)$ being positive definite means  that the unlabeled data contains non-trivial information about the whole parameter vector $\theta_0$.   Condition 2, 3, and 4 are technical conditions to ensure that the reminding terms of the approximation to mutual information term in Lemma \ref{lemma:expconcave} decays in a fast enough rate. We point out that though complicated-looking, the existence requirement of divergence and moment generating functions are in general easy to satisfy if Condition 1 holds. Furthermore,  we expect that with a refined analysis, it may be possible to prove the same result without Condition 2, 3, and 4. Indeed, a proof outline is given in \cite{clarke_comment_2012} for a similar result without additional assumptions.

Now we are ready to state the main result of this section.
\begin{theorem}[Learning rate]
Let $n, m$ be the number of labeled and unlabeled data, respectively. Assume that the loss function $\ell(w,z)$ is $\beta$-exponentially concave in $w$ for all $z$.  Assuming that Condition 1, 2, 3, and 4 above hold, we have the following statements.
\begin{itemize} 
\item[1)] (Semi-supervised learning) Let $m=\alpha n$ for some $\alpha>0$. It holds that
\begin{align*}
R_{SSL}(\theta_0)\leq \frac{K_1(\theta_0)}{2n}+o(1/n)
\end{align*}
where 
\begin{align*}
K_1(\theta_0):=&\frac{1}{\beta}\Tr((I_{XY}(\theta_0)+\alpha I_X(\theta_0))^{-1}I_{XY}(\theta_0)))\\
&-\Tr((I_{XY}(\theta_0)+\alpha I_X(\theta_0))^{-1}I_{X}(\theta_0))).
\end{align*}
\item[2)] (Supervised learning) Let $m=0$ .  It holds that 
\begin{align*}
R_{SL}(\theta_0)\leq \frac{K_2(\theta_0)}{2n}+o(1/n)
\end{align*}
where $K_2(\theta_0):=\frac{1}{\beta} (d-\Tr(I_{XY}^{-1}(\theta_0)I_X(\theta_0)))$.

\item[3)] (Semi-supervised learning with many unlabeled data) Let $m= n^{1+\gamma}$ for some $\gamma>0$. It holds that
\begin{align*}
R_{SLL}(\theta_0)\leq \frac{K_3(\theta_0)}{2n^{1+\gamma}}+o(1/n^{1+\gamma})
\end{align*}
where $K_3(\theta_0):=\frac{1}{\beta} (\Tr(I_X^{-1}(\theta_0)I_{XY}(\theta_0))-d)$.
\end{itemize}
If the loss function $\ell(w,z)$ is the $0-1$ function satisfying Condition \ref{condition_massart} with parameter $a$, the above bounds hold with $\beta=\log a$.
\label{thm:main}
\end{theorem}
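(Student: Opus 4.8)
The plan is to feed the information-theoretic upper bounds of Lemma~\ref{lemma:expconcave} into a sharpened Clarke--Barron-type expansion of a Bayesian redundancy. Fix any prior density $q$ on $\Lambda$ that is continuous and strictly positive at $\theta_0$, and for integers $k,\ell\ge 0$ set
\begin{align*}
\rho(k,\ell):=D\bigl(p_{\theta_0}(Z^k,\td X^\ell)||Q(Z^k,\td X^\ell)\bigr),\qquad Q(z^k,\td x^\ell):=\int q(\theta)\prod_{i=1}^k p_\theta(z_i)\prod_{j=1}^\ell p_\theta(\td x_j)\,d\theta .
\end{align*}
The first step is to establish the telescoping identity
\begin{align*}
I(\Theta=\theta_0;Y'\,|\,Z^n,\td X^m,X')=\rho(n+1,m)-\rho(n,m+1),
\end{align*}
which follows from the chain rule for relative entropy applied to the factorization $(Z^{n+1},\td X^m)=\bigl((Z^n,\td X^m),X_{n+1},Y_{n+1}\bigr)$: the increment obtained by appending $Y_{n+1}$ is exactly the conditional mutual information in question (this uses the precise form of $Q$ from the proof of Lemma~\ref{lemma:expconcave}), while the residual redundancy of $(Z^n,X_{n+1},\td X^m)$ equals $\rho(n,m+1)$ because, under both $p_{\theta_0}$ and $Q$, a labeled sample with its label marginalized out behaves as a fresh unlabeled sample. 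The $\log(1/q(\theta_0))$ and $-\tfrac d2\log(2\pi e)$ offsets will cancel in this difference, so only the ``volume'' part of the redundancy matters, and the answer is essentially prior-free.

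The second and central step is to prove, under Conditions~1--4, the expansion
\begin{align*}
\rho(k,\ell)=\tfrac12\log\det\!\bigl(k\,I_{XY}(\theta_0)+\ell\,I_X(\theta_0)\bigr)+\log\tfrac{1}{q(\theta_0)}-\tfrac d2\log(2\pi e)+R(k,\ell),
\end{align*}
where the remainder $R$ has the property that $R(n+1,m)-R(n,m+1)$ is $o(1/n)$ in Cases~1 and~2 and $o(1/n^{1+\gamma})$ in Case~3. This is a quantitative version of the Clarke--Barron / Rissanen asymptotics for locally regular but non-identically-distributed observations. The mechanism: by Condition~2 the contribution to the integral defining $Q$ from parameters outside a shrinking neighbourhood of $\theta_0$ is exponentially negligible (the standard Rényi-divergence estimate); inside that neighbourhood Laplace's method around the joint maximum-likelihood value produces the term $\tfrac12\log\det H$, where the Hessian $H$ of the negative joint log-likelihood concentrates on $k\,I_{XY}(\theta_0)+\ell\,I_X(\theta_0)$ with controlled error via Condition~3; the prior evaluated at the consistent estimator contributes $\log(1/q(\theta_0))+o(1)$; and the Wilks-type fact that the expected log-likelihood ratio between the joint MLE and $\theta_0$ tends to $d/2$, made quantitative by the moment-generating-function bounds of Condition~4 on the relevant quadratic forms in the score, produces the $-\tfrac d2\log(2\pi e)$ constant. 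I expect this to be the main obstacle: ordinary Clarke--Barron gives only $R(k,\ell)=o(1)$, whereas here the \emph{difference} of remainders at two nearby sample sizes must decay at rate $o(1/n)$ or faster, and this is precisely what the technical Conditions~2--4 are for; as the paper remarks, citing \cite{clarke_comment_2012}, a more delicate analysis should render them unnecessary.

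The third step is purely algebraic. In each case write the total Fisher information as $n$ (resp.\ $n^{1+\gamma}$) times a fixed positive-definite matrix $A$, pull that scalar out of the determinant so that both $\rho(n+1,m)$ and $\rho(n,m+1)$ acquire the identical $\tfrac d2\log n$ leading behavior, and expand $\log\det(A+\epsilon B)=\log\det A+\epsilon\,\Tr(A^{-1}B)+O(\epsilon^2)$. For $m=0$ this gives $\rho(n+1,0)-\rho(n,1)=\tfrac1{2n}\bigl(d-\Tr(I_{XY}^{-1}(\theta_0)I_X(\theta_0))\bigr)+o(1/n)$, the $O(1/n)$ corrections of the two determinants being $\tfrac d2\cdot\tfrac1n$ and $\tfrac1{2n}\Tr(I_{XY}^{-1}I_X)$; for $m=\alpha n$, with $A=I_{XY}(\theta_0)+\alpha I_X(\theta_0)$, the two corrections are $\tfrac1{2n}\Tr(A^{-1}I_{XY})$ and $\tfrac1{2n}\Tr(A^{-1}I_X)$; for $m=n^{1+\gamma}$, with $A=I_X(\theta_0)$, the same bookkeeping yields $\tfrac1{2n^{1+\gamma}}\bigl(\Tr(I_X^{-1}I_{XY})-d\bigr)+o(1/n^{1+\gamma})$, using that the two small coefficients of $I_{XY}$ inside the determinants differ by exactly $n^{-1-\gamma}$. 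Dividing the resulting expressions for $I(\Theta=\theta_0;Y'\,|\,Z^n,\td X^m,X')$ by $\beta$ and applying Lemma~\ref{lemma:expconcave} gives the three stated bounds with leading constants $K_1,K_2,K_3$. Finally, for the $0$--$1$ loss, Lemma~\ref{lemma:01} supplies the same mutual-information bound with $\beta=\log\alpha$ for the constant $\alpha>1$ it produces, so all three statements hold verbatim with that value of $\beta$.
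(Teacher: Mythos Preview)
Your proposal is correct and follows essentially the same route as the paper. Your telescoping identity $I(\Theta=\theta_0;Y'\,|\,Z^n,\td X^m,X')=\rho(n+1,m)-\rho(n,m+1)$ is precisely the chain-rule decomposition the paper writes as $I(\Theta=\theta_0;Z^{n+1},\td X^m)-I(\Theta=\theta_0;Z^n,\td X^m,X')$, your ``second and central step'' is exactly the content of the paper's Lemma~\ref{lemma:KL} (proved in the appendix via the Laplace/Clarke--Barron argument you sketch, with Conditions~2--4 used to make the remainder $o(1/\max\{n,m\})$ term by term, which is slightly stronger than the difference bound you state), and your determinant algebra in the third step matches the paper's line for line.
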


\begin{remark}
It can be checked straightforwardly that we have $0<K_1(\theta_0)\leq K_2(\theta_0)\leq K_3(\theta_0)$.  Item $1)$ and $2)$ show that if the number of unlabeled data $m$ grows as $O(n)$ with $n$ being the number of labeled data, then the learning rate for both supervised and semi-supervised learning converges  as $O(1/n)$ where additional unlabeled data only improves the constant from $K_2$ to $K_1$.  

Item $3)$  shows that when the number of unlabeled data $m$ is dominating $n$,  then the convergence rate is $O(1/m)$ with a larger  constant $K_3$. Hence the learning rate can be improved from $O(1/n)$ to $O(1/n^{1+\gamma})$ if the number of unlabeled data grows superlinearly with respect to $n$.  In other words, unlabeled data is equally useful in terms of the convergence rate in this case, and the loss due to not having all data labeled is only shown in the constant  ($K_3\geq K_2$). We point out that a similar observation has  also been made in \cite{gopfert_when_2019}.


\end{remark}

\begin{remark}
It is interesting to exam the upper bound in Item 1) (semi-supervised learning)  for $d=1$ when both $I_{XY}$ and $I_X$ are scalars.  In this case, the upper bound takes the form
\begin{align*}
R_{SLL}(\theta_0)\leq O\left(\frac{1}{nI_{XY}(\theta_0)-mI_X(\theta_0)}\right).
\end{align*}
In other words, one labeled data is $I_{XY}(\theta_0)/I_X(\theta_0)$  more valuable than unlabeled data as far as the convergence rate is concerned (notice that $I_{XY}(\theta_0)/I_X(\theta_0)\geq 1$) for the regime $m\sim n$.  The same result was obtained in \cite{castelli_relative_1996} for the simple mixture model with $d=1$. Our theorem extend this result to more general cases when it is not necessarily a mixture model.
\label{remark:cover}
\end{remark}

The proof of Theorem \ref{thm:main}  relies on the following asymptotic characterization of the KL divergence between $p(Z^n,\td X^m|\theta)$ and the ``mixture" distribution  $Q(\cdot)$  defined as $Q(\cdot):=\int q(\theta)p(\cdot|\theta)d\theta$.
\begin{lemma}[Asymptotic expression of KL-divergence]
Assume that Condition 1, 2, 3, and 4 hold.  Let both $m, n$ increase in a way that either $m=\alpha n$ for some $\alpha>0$, or $m=n^{1+\gamma}$ for some $\gamma>0$. Then there exists a  prior $q(\theta)$ so that
\begin{align*}
&D(p(Y^n, X^n, \td X^{m}|\theta)||Q(Y^n, X^n, \td X^{m}))\\
&=\frac{d}{2}\log\frac{1}{2\pi e}+\log\frac{1}{q(\theta)}\\
&+\frac{1}{2}\log |nI_{XY}(\theta)+mI_X(\theta)|+ o(1/\max\{n,m\})
\end{align*}
Under the same assumptions and let $m=0$, we have
\begin{align*}
&D(p(Y^n, X^n|\theta)||Q(Y^n, X^n))\\
&=\frac{d}{2}\log\frac{1}{2\pi e}+\log\frac{1}{q(\theta)}+\frac{1}{2}\log |nI_{XY}(\theta)|+o(1/n).
\end{align*}
\label{lemma:KL}
\end{lemma}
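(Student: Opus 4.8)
The plan is to follow the classical Laplace-approximation analysis of Bayes mixtures (in the spirit of \cite{clarke_barron_1990}), adapted to the mixed sample consisting of $n$ labeled and $m$ unlabeled points. Write $W:=(Y^n,X^n,\td X^m)$ and set $A_{n,m}(\theta):=nI_{XY}(\theta)+mI_X(\theta)$; by Condition 1 this matrix is positive definite, and in every regime considered its smallest eigenvalue tends to infinity (at rate $n$ when $m=\alpha n$ or $m=0$, and at rate $m=n^{1+\gamma}$ when $m=n^{1+\gamma}$, using $I_X(\theta_0)\succ 0$). The starting point is the elementary identity
\begin{align*}
D(p(W|\theta)\|Q(W))=\log\frac{1}{q(\theta)}-\EE{\theta}{\log\int\frac{q(\theta')}{q(\theta)}\,\frac{p(W|\theta')}{p(W|\theta)}\,d\theta'},
\end{align*}
which holds because $Q(W)=\int q(\theta')p(W|\theta')\,d\theta'$. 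The prior $q$ is taken to be any fixed density that is continuous and strictly positive in a neighbourhood of $\theta_0$ (and, if needed for the far tail, supported near $\theta_0$); only the value $q(\theta)$ and local smoothness of $q$ enter the final expression, so essentially any such prior works, which is all the lemma claims.

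For the inner integral I would split the domain into a shrinking ball $B_n=\{\theta':\|\theta'-\theta\|\le\delta_n\}$ with $\delta_n\to 0$ chosen slowly, and its complement. On the complement the likelihood ratio $p(W|\theta')/p(W|\theta)$ has exponentially small expectation by a Chernoff/Rényi argument, and Condition 2 (boundedness of the order-$(1+\lambda)$ Rényi divergences) is exactly what makes this contribution $o(1/\max\{n,m\})$. On $B_n$, Taylor-expand the log-likelihood ratio as $(\theta'-\theta)^T S_W-\tfrac12(\theta'-\theta)^T J_W(\theta'-\theta)+(\text{cubic remainder})$, where $S_W=\nabla\log p(W|\theta)$ is the score and $J_W=-\nabla^2\log p(W|\tilde\theta)$ for some intermediate $\tilde\theta$. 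Under $p_\theta$ the score $S_W$ has mean zero and covariance \emph{exactly} $A_{n,m}(\theta)$ (the labeled and unlabeled blocks are independent with per-sample covariances $I_{XY}$ and $I_X$), while $\tfrac1{n+m}J_W$ concentrates around its mean; Condition 3 supplies the moment-generating-function control needed to write $J_W=A_{n,m}(\theta)(I+o(1))$ at a sharp enough rate and to bound the cubic remainder on $B_n$. Carrying out the Gaussian integral then gives $\int_{B_n}\approx q(\theta)(2\pi)^{d/2}|J_W|^{-1/2}\exp(\tfrac12 S_W^T J_W^{-1}S_W)$ up to lower-order terms, so that
\begin{align*}
\log\frac{p(W|\theta)}{Q(W)}=\log\frac{1}{q(\theta)}+\frac{d}{2}\log\frac{1}{2\pi}+\frac12\log|J_W|-\frac12 S_W^T J_W^{-1}S_W+(\text{error}).
\end{align*}

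Taking the expectation over $W\sim p_\theta$, I would establish $\EE{\theta}{\tfrac12\log|J_W|}=\tfrac12\log|A_{n,m}(\theta)|+o(1/\max\{n,m\})$ and $\EE{\theta}{\tfrac12 S_W^T J_W^{-1}S_W}=\tfrac12\Tr\!\big(A_{n,m}(\theta)^{-1}\EE{\theta}{S_WS_W^T}\big)+o(1/\max\{n,m\})=\tfrac{d}{2}+o(1/\max\{n,m\})$; replacing the random observed information $J_W$ by the deterministic $A_{n,m}(\theta)$ inside both $\log|\cdot|$ and the quadratic form is where Condition 4 (MGFs of the quadratic forms $l^T(I_{XY}+I_X)l$, etc.) is used to bound the fluctuations after taking expectations. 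Combining the three pieces gives $D(p(W|\theta)\|Q(W))=\log\frac{1}{q(\theta)}+\frac d2\log\frac1{2\pi}-\frac d2+\frac12\log|A_{n,m}(\theta)|+o(1/\max\{n,m\})$, and since $\frac d2\log\frac1{2\pi}-\frac d2=\frac d2\log\frac1{2\pi e}$ this is the claimed formula; the statement for $m=0$ is the specialization $A_{n,0}=nI_{XY}(\theta)$.

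The main obstacle is not the leading-order expansion, which is standard, but the \emph{precision}: the lemma demands a remainder of order $o(1/\max\{n,m\})$ rather than the customary $o(1)$, because in Theorem \ref{thm:main} it is applied to a difference of two such KL divergences in which the $\Theta(\log n)$ and $\Theta(1)$ terms (including $\log\frac1{q(\theta_0)}$) cancel and only the $1/n$ term survives. This forces one to (i) tune $\delta_n$ so that the cubic and higher Taylor remainders on $B_n$ are $o(1/\max\{n,m\})$ while the mass outside $B_n$ is still negligible, (ii) retain second-order terms when passing from $J_W$ to $A_{n,m}(\theta)$ in both $\log|J_W|$ and $S_W^TJ_W^{-1}S_W$, and (iii) show that the expectations of all these second-order fluctuations are themselves $o(1/\max\{n,m\})$ — which is exactly what the moment-generating-function hypotheses of Conditions 2--4 are engineered to deliver. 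A secondary technical point is uniformity in the regime $m=n^{1+\gamma}$, where $A_{n,m}$ is dominated by $mI_X(\theta_0)$: here the positive definiteness of $I_X(\theta_0)$ from Condition 1 is essential both for invertibility and for the concentration of the mixed score and Hessian sums at the required rate $o(1/m)$.
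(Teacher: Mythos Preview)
Your proposal is correct in outline and follows essentially the same Laplace/Clarke--Barron strategy as the paper, so this is not a genuinely different route. A few organisational differences are worth noting. The paper works with a \emph{fixed} small neighbourhood $N_\delta$ and instead lets the tolerance $\epsilon$ shrink exponentially; it then defines three explicit good events $A(\delta,\epsilon)$, $B(\delta,\epsilon)$, $C(\delta)$ (controlling, respectively, the posterior mass outside $N_\delta$, the uniform closeness of the empirical Hessian to $I_{n,m}$, and the size of the score quadratic form $D(\theta_0)$), proves deterministic upper and lower bounds on $p/Q$ on $A\cap B$ and $B\cap C$, and shows $\pp{A^c},\pp{B^c},\pp{C^c}=O(e^{-\min\{m,n\}\rho})$ using Conditions 2, 3, 4 in exactly the roles you identified. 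Your shrinking-ball version with $\delta_n\to 0$ is the other standard way to run Laplace's method and can be made to work, but the paper's fixed-$\delta$/shrinking-$\epsilon$ bookkeeping is what allows it to push all error terms to $O(e^{-cn})$ cleanly.

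One point to tighten: you write that ``essentially any'' continuous positive prior works. The paper in fact \emph{chooses} $q$ uniform precisely so that the prior-fluctuation term $\rho(\delta,\theta_0)=\sup_{\theta\in N_\delta}|\log q(\theta)/q(\theta_0)|$ vanishes identically; with fixed $\delta$ and a generic smooth prior this term is a nonzero constant, not $o(1/\max\{n,m\})$. In your shrinking-$\delta_n$ formulation a smooth prior can be accommodated, but you then have to balance $\delta_n$ so that the prior fluctuation, the cubic Taylor remainder, and the outside-ball mass are all $o(1/\max\{n,m\})$ simultaneously, which is an extra (doable) step. Since the lemma only asserts existence of a prior, the cleanest fix is simply to take $q$ uniform as the paper does.
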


The same approximation result has been established in \cite{clarke_barron_1990} where the authors showed that the reminder term vanishes as $n\rightarrow\infty$. In our case, we need to show that it vanishes with a fast enough rate $o(1/n)$. This lemma is proved in Section \ref{append:main} in  the Supplementary Materials.

Equipped with Lemma \ref{lemma:KL},  we are ready to give a proof of Theorem \ref{thm:main}.
\begin{proof}
We first show the upper bound on $R_{SL}$.  Using the chain rule of mutual information, we have
\begin{align*}
I(Y';\Theta|Z^n,X')&=I(Y',X',Z^n;\Theta)-I(Z^n,X';\Theta).
\end{align*}
Using Lemma \ref{lemma:KL}, we have
\begin{align*}
&I(\Theta=\theta_0; Z^n, X')= D(p_{\theta_0}(X', X^n,Y^n)||Q(X', X^n,Y^n))\\
&=\frac{d}{2}\log\frac{1}{2\pi e}+ \log \frac{\sqrt{|nI_{XY}(\theta_0)+I_{X}(\theta_0)|}}{q(\theta_0)}+o(1/n)\\
&=\frac{d}{2}\log\frac{1}{2\pi e}+  \log\frac{\sqrt{|nI_{XY}(\theta_0)|}}{q(\theta_0)}\\
&+ \log \frac{\sqrt{|\ve I + \frac{1}{n} I_{XY}^{-1}(\theta_0)I_{X}(\theta_0)|}}{q(\theta_0)}+o(1/n).
\end{align*}
By noticing that $(Y',X',Z^n)$ has the same distribution as $Z^{n+1}$, we have
\begin{align*}
&I(\Theta=\theta_0; Y',X', Z^n)=I(\Theta=\theta_0; Z^{n+1})\\
&= D(p_{\theta_0}(X^{n+1},Y^{n+1})||Q(X^{n+1},Y^{n+1}))\\
&=\frac{d}{2}\log\frac{1}{2\pi e}+\log \frac{\sqrt{|(n+1)I_{XY}(\theta_0)|}}{q(\theta_0)}+o(1/n). 
\end{align*}
Hence
\begin{align*}
&I(\Theta=\theta_0; Y'| X^n, Y^n,X')\\
&=I(\Theta=\theta_0; Y',X',Z^n)-I(\Theta=\theta_0; X',Z^n)\\
&= \frac{d}{2}\log\frac{n+1}{n}- \frac{1}{2}\log |\ve I+\frac{1}{n}I_{XY}^{-1}(\theta_0)I_X(\theta_0)|+o(1/n).
\end{align*}
Using the expansion of determinant:
\begin{align}
\left|\ve I+\frac{1}{n}A\right|=1+\frac{1}{n}\Tr(A)+o(1/n),
\label{eq:det}
\end{align}
we have
\begin{align*}
&I(\Theta=\theta_0; Y'| X^n, Y^n,X')\\
&\leq\frac{d}{2}\log\frac{n+1}{n}-\frac{1}{2}\log\Big(1+\frac{1}{n} \Tr(I_{XY}^{-1}(\theta_0)I_Y(\theta_0))\\
&+o(1/n)\Big)+o(1/n)\\
&= \frac{d}{2n}-\frac{\Tr(I_{XY}^{-1}(\theta_0)I_X(\theta_0))}{2n}+o(1/n)
\end{align*}
where we use the fact $\log(1+\frac{C}{n} )=\frac{C}{n}+o(1/n)$ for some  $C>0$. The bound on $R_{SL}$ follows from Lemma \ref{lemma:expconcave} by defining $K_2$ as in the theorem.

To bound $R_{SSL}$ for the case $m=\alpha n$, we have
\begin{align*}
&I(\Theta=\theta_0; Y'| X^n, Y^n, \td X^m,X')\\
&=I(\Theta=\theta_0; X', Y', X^n, Y^n,\td X^m)\\
&-I(\Theta=\theta_0;X^n, Y^n,\td X^m, X')\\
&=\frac{1}{2}\log |(n+1)I_{XY}(\theta_0)+\alpha nI_X(\theta_0)|\\
&-\frac{1}{2}\log |nI_{XY}(\theta_0)+(\alpha n+1)I_X(\theta_0)|+o(1/n)\\
&=\frac{1}{2}\log |\ve I+\frac{1}{n}A^{-1}I_{XY}(\theta_0)|-\frac{1}{2}\log |\ve I+\frac{1}{n}A^{-1}I_{XY}(\theta_0)|\\
&+o(1/n)
\end{align*}
where $A:=I_{XY}(\theta_0)+\alpha I_X(\theta_0)$. Using (\ref{eq:det}) again together with $\log(1+C/n)=C/n+o(1/n)$, we obtain the claimed constant $K_1(\theta_0)$.

For the case $m=n^{1+\gamma}$, Lemma \ref{lemma:KL} shows that 
\begin{align*}
&I(\Theta=\theta_0; Y'| X^n, Y^n, \td X^m,X')\\
&=I(\Theta=\theta_0; X', Y', X^n, Y^n,\td X^m)\\
&-I(\Theta=\theta_0; X^n, Y^n,\td X^m, X')\\
&=\frac{1}{2}\log |\ve I+\frac{n+1}{n^{1+\gamma}}I_{X}^{-1}(\theta_0)I_{XY}(\theta_0)|\\
&-\frac{1}{2}\log |\ve I+\frac{1}{n^{1+\gamma}}I_{X}^{-1}(\theta_0)(nI_{XY}(\theta_0))+I_X(\theta_0)|\\
&+o(1/n^{1+\gamma}).
\end{align*}
Using (\ref{eq:det}) and $\log(1+C/n)=C/n+o(1/n)$ again, the main terms in the above expression simplifies to
\begin{align*}
&\frac{n+1}{2n^{1+\gamma}}\Tr(I_{X}^{-1}(\theta_0)I_{XY}(\theta_0))-\frac{n\Tr(I_{X}^{-1}(\theta_0)I_{XY}(\theta_0))+d}{2n^{1+\gamma}}\\
&=\frac{\Tr(I_{X}^{-1}(\theta_0)I_{XY}(\theta_0))-d}{2n^{1+\gamma}},
\end{align*}
which gives the desired constant $K_3(\theta_0)$.
\end{proof}


\section{LOWER BOUND}
Theorem \ref{thm:main} only gives upper bounds on the learning rate, and it is natural to ask whether the $O(1/n)$ rate for supervised learning is optimal. In other words, whether the rate improvement in semi-supervised learning  is genuinely due to additional unlabeled data, and not possible with labeled data alone. The next result shows this is indeed the case. Specifically, we show for a certain type of loss function, the mutual information characterization in Lemma \ref{lemma:expconcave} is in fact exact for the worst-case $\theta_0$.

To formulate the result, define the maximin excess risk of the supervised learning algorithm to be
\begin{align*}
R_{SL}^*:=	\max_{\theta_0}R_{SL}(\theta_0) =\max_{\theta}\min_{w }\E{R_{\theta}(w_{Z^n})}.
\end{align*}
We will consider the self-information loss function mentioned in Section \ref{sec:bounds}. Formally, for the case when $|\mathcal Y|$ is finite,  the self-information loss function can be defined as
$\ell(w_{Z^n}(x), y)=-\log w_{Z^n}(x)^T \bar y$
where $w_{Z^n}(x)$ is a length-$|\mathcal Y|$ probability vector (a vector with nonnegative entries and sum to $1$) depending on $Z^n$ and $x$, and $\bar y$ is the ``one-hot"  vector  of length-$|\mathcal Y|$  that consisting $1$ at the entry with value $y$ and $0$ otherwise. Namely $\ell(w_{Z^n}(x), y)$ returns a value in $[0,1]$ denoting the predicted probability that $Y=y$. In the case when $Y$ is continuous, the self-information loss can be written as $-\log w(y)$ where $w$ denotes a distribution on $Y$.

\begin{lemma}[Exact  excess risk for self-information loss]
For the self-information loss function defined above, we have
\begin{align*}
R_{SL}^*=\max_{q(\theta)}I(Y';\Theta|X^n,Y^n,X')
\end{align*}
where the distribution  of $(\Theta, X', Y', X^n, Y^n)$ is given by $q(\theta)p_{\theta}(x',y')\prod_{i=1}^n p_{\theta}(x_i, y_i)$.
\label{lemma:lower_SL}
\end{lemma}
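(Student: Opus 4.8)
### Proof Proposal

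The plan is to establish the identity by showing the two inequalities $R_{SL}^* \le \max_q I(Y';\Theta|X^n,Y^n,X')$ and $R_{SL}^* \ge \max_q I(Y';\Theta|X^n,Y^n,X')$ separately, where the first direction is essentially already contained in Lemma \ref{lemma:expconcave} and the second direction is the new content requiring a minimax/maximin interchange argument specific to the self-information loss. Recall that the self-information loss is $1$-exp-concave, so $\beta=1$ throughout.

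\textbf{Upper bound direction.} Here I would invoke Lemma \ref{lemma:expconcave} with $\beta=1$: for \emph{any} prior $q(\theta)$, the Bayes predictor $\hat w_{Z^n}$ of (\ref{eq:hat_w}) achieves $R_{\theta_0}(\hat w_{Z^n}) \le I(\Theta=\theta_0;Y'|X^n,Y^n,X')$ where the joint law uses that particular $q$. Taking the maximum over $\theta_0$ and then the infimum over $w$ gives $R_{SL}^* = \max_{\theta_0}\min_w \E{R_{\theta_0}(w_{Z^n})} \le \max_{\theta_0} I(\Theta=\theta_0;Y'|X^n,Y^n,X')$. The slight subtlety is moving from the conditional-on-$\Theta=\theta_0$ mutual information to the averaged mutual information $I(Y';\Theta|X^n,Y^n,X')=\int q(\theta_0) I(\Theta=\theta_0;Y'|\cdots)d\theta_0$; one chooses $q$ concentrated near the maximizing $\theta_0$ (or uses that the averaged quantity with an optimized $q$ dominates, since one may pick $q$ to put all its mass on the worst $\theta_0$). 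This yields $R_{SL}^* \le \max_q I(Y';\Theta|X^n,Y^n,X')$.

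\textbf{Lower bound direction.} This is the crux. The key structural fact about the self-information (log) loss is that for a \emph{fixed} prior $q$, the Bayes-optimal predictor is exactly the posterior predictive distribution $\hat w(y'|z^n,x') = Q(y'|z^n,x')$, and its Bayes risk relative to the oracle equals precisely $I(Y';\Theta|X^n,Y^n,X')$ — with \emph{equality}, not just an upper bound, because the only inequality used in the proof of Lemma \ref{lemma:expconcave} (the term $\sum_{y'}Q(y'|z^n,x')\frac{e^{-\ell(w^*,y')}}{e^{-\ell(\hat w,y')}}\le 1$) becomes an equality when $\ell=-\log w$ and $\hat w = Q(\cdot|z^n,x')$: indeed that sum is $\sum_{y'} Q(y'|z^n,x')\frac{w^*(y')}{Q(y'|z^n,x')} = \sum_{y'} w^*(y') = 1$. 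Hence the Bayes risk of the best predictor against prior $q$ is $\int q(\theta)R_\theta(\hat w)\,d\theta = I(Y';\Theta|X^n,Y^n,X')$. Then I would write $\max_\theta \min_w \E{R_\theta(w_{Z^n})} \ge \min_w \int q(\theta)\E{R_\theta(w_{Z^n})}d\theta = I(Y';\Theta|X^n,Y^n,X')$ for every $q$ (the first inequality is max $\ge$ average; the second equality is the Bayes-risk identity just established, noting that the unconditional Bayes predictor is optimal against the averaged objective). Taking $\max$ over $q$ completes the lower bound.

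\textbf{Main obstacle.} The delicate point is the exact Bayes-risk identity for the log loss and confirming that the posterior predictive $Q(\cdot|z^n,x')$ is genuinely the minimizer over all probability-vector-valued hypotheses $w$, which follows from Gibbs' inequality / nonnegativity of KL divergence: for fixed $z^n,x'$, $\E{-\log w(Y')\,|\,z^n,x'} = H(Q(\cdot|z^n,x')) + D(Q(\cdot|z^n,x')\,\|\,w)$ under the $Q$-posterior, minimized at $w=Q(\cdot|z^n,x')$. One must also be careful that the oracle term $L_\theta(w^*)$ is the conditional entropy $H(Y'|Z^n,X',\Theta=\theta)$ and that the difference telescopes exactly into the conditional mutual information via the chain rule — i.e. $\int q(\theta)[L_\theta(\hat w)-L_\theta(w^*)]d\theta = H(Y'|Z^n,X') - H(Y'|Z^n,X',\Theta) = I(Y';\Theta|Z^n,X')$. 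A minor technical caveat to flag (perhaps in a remark rather than belabored here) is measurability/existence of the maximizing prior and interchangeability of $\max_q$ with the expectations, which is standard under the paper's continuity assumptions but should be acknowledged; this mirrors the classical minimax-redundancy = channel-capacity result cited via \cite{Gallager_capacity} and \cite{merhav_universal_1998}.
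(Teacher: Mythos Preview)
The paper itself gives no proof here; it simply cites the classical redundancy--capacity theorem of \cite{Gallager_capacity}. Your proposal tries to supply a self-contained two-inequality argument, but there is a genuine gap in the lower-bound direction.

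You claim
\begin{align*}
\max_{\theta}\min_{w}\E{R_{\theta}(w_{Z^n})}\;\ge\;\min_{w}\int q(\theta)\,\E{R_{\theta}(w_{Z^n})}\,d\theta,
\end{align*}
justified as ``max $\ge$ average.'' That justification gives $\max_\theta f(\theta,w)\ge\int q(\theta)f(\theta,w)\,d\theta$ for each \emph{fixed} $w$, and hence $\min_w\max_\theta\ge\min_w\int q$. It does \emph{not} give $\max_\theta\min_w\ge\min_w\int q$. In fact, with the paper's literal order $\max_\theta\min_w$, the inner minimiser may depend on $\theta$, so taking $w\equiv w^*_\theta$ yields $\max_\theta\min_w=0$, while your right-hand side is $I(Y';\Theta\mid Z^n,X')>0$ for any non-degenerate prior. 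The displayed inequality is therefore false in the very setting under discussion.

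What this exposes is that the statement one actually wants (and what the redundancy--capacity theorem delivers) is the \emph{minimax} identity $\min_w\max_\theta=\max_q I$; the equality of max--min and min--max for the log-loss game is precisely the non-trivial saddle-point content of \cite{Gallager_capacity}, and it cannot be obtained by elementary averaging alone. If you reinterpret $R_{SL}^*$ as $\min_w\max_\theta$, your lower-bound step becomes valid weak duality, but then the \emph{upper} bound $\min_w\max_\theta\le\max_q I$ is the hard direction. Your upper-bound sketch does not reach it: Lemma~\ref{lemma:expconcave} only yields $\min_w\max_\theta\le\inf_q\max_\theta I(\Theta=\theta;Y'\mid\cdots)$, and identifying $\inf_q\max_\theta$ with $\max_q I$ is again Gallager's theorem (it uses that the capacity-achieving prior equalises $D(p_\theta\Vert Q)$ across its support). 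Your alternative suggestion of putting all prior mass on the worst $\theta_0$ makes $I=0$ and is of no help. In short, the missing ingredient on both sides is exactly the saddle-point argument the paper defers to the cited reference.
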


This lemma follows directly from the classical ``redundancy-capacity" result in universal prediction, and a proof can be found in e. g. \cite{Gallager_capacity}. Combined with the result in Lemma \ref{lemma:KL}, we see that the $O(1/n)$ convergence rate is optimal for the self-information loss function in the worst case, and cannot be improved.

\section{COMPARISONS AND EXAMPLES}
\label{sec:compare}
In this section, we compare our results with several existing results  in the literature, and comment on the difference in the problem formulation. 

As  mentioned in Remark \ref{remark:cover},  \cite{castelli_relative_1996} studied the mixture model where the individual density functions are known but the mixing parameter is unknown. It was shown that in this case, unlabeled and labeled data play the same role in terms of convergence rate. Theorem \ref{thm:main} extends this result  to more general cases.  More precisely, \cite{castelli_relative_1996} studied the distribution (formulated with our notation)
\begin{align*}
p(x,y|\theta)=(\theta f_1(x))^{\ve 1_{y=1}}(\bar \theta f_2(x))^{\ve 1_{y=2}}
\end{align*}
where $\bar \theta:=1-\theta$ and $f_1$ and $f_2$ are two density functions.  Take the example in \cite[Sec. III, A]{castelli_relative_1996} where $f_1(x)=2x$ and $f_2(x)=2(1-x)$ for $x\in[0,1]$ and $\theta_0=1/2$. It is straightforward to show that the conditions used in Theorem \ref{thm:main} are satisfied, and  that $I_{XY}(\theta_0)=4$ and $I_X(\theta_0)=4/3$. In the case when $n\sim m$, Theorem \ref{thm:main} states that
\begin{align*}
R_{SSL}\leq O\left(\frac{1}{4n+\frac{4}{3}m}\right)
\end{align*}
which recovers the result in \cite{castelli_relative_1996}. However, it is also shown in this paper that if both the association and the mixing parameters are unknown, then the learning rate, to the first order, decays exponentially fast with the number of labeled data if $e^{n}m^{-1}\rightarrow 0$ (see \cite[Thm. 2]{castelli_relative_1996} for details). This observation cannot be deduced from our results because it is not covered in our problem formulation. Indeed, the assumption with unknown association can only be converted into our model  by introducing  \textit{discrete} parameters, where our current model assumes that the unknown parameter takes continuous values in $\mathbb R^d$.   The same comment also applies to the  problem formulation in \cite{rigollet_generalization_2007}, where the notion of ``cluster" is in spirit similar to a model with discrete unknown parameters, which is not handled in our problem formulation.

\cite{gopfert_when_2019}  studied different problem formulations under which the unlabeled data can improve the learning rate. In particular, the authors discussed three different  approaches  in the literature, namely \textit{``improvements via idealistic SSL"}, \textit{``improvements via sample size dependent classes"}, and \textit{``improvements via easy marginal estimation"} (see the paper for  detailed information). Our problem formulation does not fall under the first two categories, as we neither assume that the true marginal distribution of the label is known to the learning algorithm, nor allow the distribution depend on the number of samples. The reason for the improvement in Theorem \ref{thm:main} is because $X$ contains non-trivial information about the whole parameter vector $\theta_0$ (Condition 1), hence is closer to the third category. Nevertheless, our result  confirms the observation in \cite{gopfert_when_2019} that we can have non-trivial rate change in SSL if $m$ grows faster than $n$ (see, e. g. Example 3 in the paper).    Lastly, we point out that our result does not contradict with the lower bound $O(1/\sqrt{n})$  in \cite[Appendix B]{gopfert_when_2019}, because they allow the distribution to depend on the number of samples.

We provide one additional example to illustrate our result.   Consider a model where $X,Y$ is given by 
\begin{align*}
X=Y+Z
\end{align*}
with $Y\sim\mathcal N(0,\sigma^2)$ and $Z\sim\mathcal N(\mu, 1)$ being independent. The (unknown) parameter $\theta_0$ is a two-dimensional vector $\theta_0=(\mu, \sigma^2)$ with some $\sigma^2>0$ and $\mu\in\mathbb R$.  We would like to predict $Y$ from $X$ given labeled  data $(X_i, Y_i), i=1,\ldots, n$ and possible unlabeled data $\tilde X_i, i=1,\ldots, m$ with the self-information loss function $-\log w(y)$ where $w$ is a distribution on $\mathbb R$. Notice this is not a mixture model so it is not clear from previous results that the unlabeled data is useful. However we expect that additional $\hat X_i$ should be helpful as it does provide information about the variance $\sigma^2$, which is essentially what we need for predicting $Y$ under the self-information loss. Indeed,  our result in Theorem \ref{thm:main} confirms the intuition.  Straightforward calculation shows that the Fisher information matrices are
\begin{align*}
I_X=\begin{pmatrix}
\frac{1}{\sigma^2+1} &0\\
0 &\frac{1}{2(\sigma^2+1)^2}
\end{pmatrix},\quad I_{XY}=\begin{pmatrix}
1 &0\\
0 &\frac{1}{2\sigma^4}
\end{pmatrix}
\end{align*}
which are both positive definite and Condition 1 holds. We can also verify straightforwardly that Condition 2, 3, and 4 hold for this Gaussian model.

It can be calculated that the constants in Theorem \ref{thm:main} are given by
\begin{align*}
K_1&=\frac{\sigma^2}{\alpha+\sigma^2+1}+\frac{1+2\sigma^2}{2\sigma^2+(\alpha+1)\sigma^4+1},\\
K_2&=1+\frac{\sigma^2}{(\sigma^2+1)^2},\\
K_3&=\frac{\sigma^6+2\sigma^2+1}{\sigma^4}.
\end{align*}
We can inspect the SSL case by noting that (in this case $\beta=1$)
\begin{align*}
K_1<\frac{2+4\sigma^2}{1+\sigma^2+\alpha\min\{1,\sigma^4\}}
\end{align*}
hence the semi-supervised learning rate scales as
\begin{align*}
O\left(\frac{1}{(1+\sigma^2)n+\min\{1,\sigma^4\}m}\right).
\end{align*}
In this case, both labeled and unlabeled examples contribute to the learning rate in the same order. Labeled data is $\frac{1+\sigma^2}{\min\{1,\sigma^4\}}$ times more valuable  than unlabeled data in terms of the leading constant.



\subsubsection*{Acknowledgements}

The author would like to thank the anonymous reviewers for their helpful feedback. The author would also like to thank Prof. Bertrand Clarke for valuable discussions.

\bibliographystyle{apalike}
\bibliography{SSL}

\newpage

\newpage

\newpage

\textbf{Supplementary Materials to \textit{``Semi-Supervised Learning: the Case When Unlabeled Data is Equally Useful"}}

\appendix

\section{Derivation in the proof of Lemma \ref{lemma:01}}
\label{append:lowerbound}

Here we show the claim $\frac{Q(\hat y|z^n,x')}{Q(y^*|z^n,x')}\geq a$ under Condition \ref{condition_massart} in the proof of Lemma \ref{lemma:01}. This ratio can be written out explicitly as 
\begin{align}
\frac{Q(\hat y|z^n, x')}{Q(y^*|z^n,x')}=\frac{\int p_{\theta}(\hat y|x')p(\theta|x', z^{n})d\theta}{\int p_{\theta}(y^*| x')p(\theta|x', z^n)d\theta}.
\label{eq:Q_ratio}
\end{align}
By Condition \ref{condition_massart},  given any $x'$ there exist some $y$ such that $p_{\theta}(y|x')\geq a p_{\theta}(y'|x')$ for any $y'\neq y$. Given the above expression, it implies that $\frac{Q(y|z^n,x')}{Q(y'|z^n,x')}\geq a$. On the other hand, it is well known that for the $0-1$ loss, the optimal classifier in (\ref{eq:hat_w})  is given by the maximum \textit{a posteriori} classifier $\hat y=\text{max}_y Q(y|z^n,x')$, hence we have $\frac{Q(\hat y|z^n,x')}{Q(y^*|z^n,x')}\geq a$.

\section{Proof of Lemma \ref{lemma:KL}}
\label{append:main}

Our proof will largely follow the strategy used in~\cite{clarke_barron_1990}, \cite{Bertrand_thesis}.  The main idea is to approximate the density ratio  $p(Z^n,\td X^m|\theta)/Q(Z^n, \td X^m)$ around $\theta$ using Laplace's method, and control the decay rate of the remaining terms. The definitions of various sets in the proof differ slightly from \cite{clarke_barron_1990} to suit our purpose. As our proof is long but follows closely to the above two references, we will  highlight the different parts and  refer to the original proof for repetitive steps.

We use $p(Z^n,\td X^m|\theta)$ to denote the likelihood defined as
\begin{align*}
p(Z^n,\td X^m|\theta):=\prod_{i=1}^np_{\theta}(X_i,Y_i)\prod_{j=1}^mp_{\theta}(\td X_j).
\end{align*}
Define the (unnormalized) score function as
\begin{align*}
l_{XY}(\theta)&:=\nabla \log p(Z^n|\theta)\\
l_{X}(\theta)&:=\nabla \log p(\td X^n|\theta),
\end{align*}
and  the  (unnormalized) empirical information matrix
\begin{align*}
I^*_{XY}(\theta)&:=-[ \partial^2(\log p(Z^n|\theta))/\partial \theta_j\partial \theta_k]_{j,k=1,\ldots, d }\\
I^*_{X}(\theta)&:=-[ \partial^2(\log p(\td X^{m}|\theta))/\partial \theta_j\partial \theta_k]_{j,k=1,\ldots, d }
\end{align*}


Let $\theta_0$  denote the true parameter that generate the data $Z^n, \td X^m$.  Define $N_{\delta}=\{\theta: \|\theta-\theta_0\|\leq \delta\}$.  For convenience, the norm is defined as
\begin{align*}
\|\xi\|^2=\xi^T(I_{XY}(\theta_0)+I_X(\theta_0))\xi.
\end{align*}
For $0<\epsilon<1$ and $\delta>0$, define
\begin{align*}
A(\delta,\epsilon):=\Bigg\{&\int_{N_{\delta}^c}p(Z^n,\td X^m|\theta)q(\theta)d\theta\\
&\leq \epsilon \int_{N_{\delta}}p(Z^n,\td X^m|\theta)q(\theta)d\theta\Bigg\}.
\end{align*}
For convenience, we also define 
\begin{align*}
I_{n,m}:=nI_{XY}(\theta_0)+mI_X(\theta_0)
\end{align*}
and
\begin{align*}
D(\theta_0):=(l_{XY}(\theta_0)+l_X(\theta_0))^TI_{n,m}^{-1}(l_{XY}(\theta_0)+l_X(\theta_0)).
\end{align*}
Notice that
\begin{align*}
\E{D(\theta_0)}&=\mathbb E\{\Tr((I_{n,m}^{-1})(l_{XY}(\theta_0)+l_X(\theta_0))^T\\
&(l_{XY}(\theta_0)+l_X(\theta_0))\}\\
&=\Tr(I_{n,m}^{-1}(nI_{XY}(\theta)+m I_{XY}(\theta)))=d
\end{align*}
Lastly, define
\begin{align*}
B(\delta,\epsilon):=\{&(1-\epsilon)(\theta-\theta_0)^TI_{n,m}(\theta-\theta_0)\\
&\leq (\theta-\theta_0)^T(I_{XY}^*(\theta')+I_X^*(\theta'))(\theta-\theta_0)\\
&\leq(1+\epsilon) (\theta-\theta_0)^TI_{n,m}(\theta-\theta_0)\\
&\text{ for all }\theta, \theta'\in N_{\delta}\}\\
C(\delta):=&\{D(\theta_0)\leq \min\{n,m\}\delta^2\}
\end{align*}and
\begin{align*}
\rho(\delta, \theta_0):=\sup_{\theta\in N_{\delta}}|\log\frac{q(\theta)}{q(\theta_0)}|.
\end{align*}

In the sequel, we assume that both $m, n$ increase in a way that either $m=\alpha n$ for some $\alpha>0$, or $m=n^{1+\gamma}$ for some $\gamma>0$. Following \cite{clarke_barron_1990}, we have following upper and lower bounds on the density ratio.
\begin{lemma}
\label{lemma:upper_lower}
Assume that the Condition 1 is satisfied, and $q(\theta)$ continuous at $\theta_0$. Then on the set $A\cap B$, we have 
\begin{align*}
\frac{Q(Z^n,\td X^m)}{p(Z^n, \td X^m|\theta_0)}&\leq (1+\epsilon)q(\theta_0)e^{\rho(\delta,\theta_0)}(2\pi)^{d/2}\\
&\cdot e^{1/(2(1-\epsilon))D(\theta_0)}|(1-\epsilon)I_{n,m}|^{-1/2}
\end{align*}
On the set $B\cap C$, we have
\begin{align*}
&\frac{Q(Z^n,\td X^m)}{p(Z^n, \td X^m|\theta_0)}\geq q(\theta_0)e^{-\rho(\delta,\theta_0)}(2\pi)^{d/2}e^{1/(2(1+\epsilon))D(\theta_0)}\\
&\cdot (1-2^{d/2}e^{-\epsilon^2\min\{n,m\}\delta^2/8})|(1+\epsilon)I_{n,m}|^{-1/2}
\end{align*}
\end{lemma}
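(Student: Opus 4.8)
The plan is to establish the upper and lower bounds on the density ratio $Q(Z^n,\td X^m)/p(Z^n,\td X^m|\theta_0)$ separately, by splitting the integral defining $Q$ over the neighbourhood $N_\delta$ and its complement $N_\delta^c$ and applying Laplace's method (a second-order Taylor expansion of the log-likelihood) inside $N_\delta$. For the upper bound, on the event $A(\delta,\epsilon)$ the contribution from $N_\delta^c$ is at most $\epsilon$ times the contribution from $N_\delta$, so it suffices to bound $\int_{N_\delta} p(Z^n,\td X^m|\theta)q(\theta)\,d\theta$ and then multiply by $(1+\epsilon)$. On $N_\delta$ I would write $\log p(Z^n,\td X^m|\theta) = \log p(Z^n,\td X^m|\theta_0) + (l_{XY}(\theta_0)+l_X(\theta_0))^T(\theta-\theta_0) - \tfrac12(\theta-\theta_0)^T(I^*_{XY}(\theta')+I^*_X(\theta'))(\theta-\theta_0)$ for some $\theta'$ on the segment between $\theta$ and $\theta_0$ (exact Taylor with integral/mean-value remainder), and use the event $B(\delta,\epsilon)$ to sandwich the quadratic form by $(1\mp\epsilon)(\theta-\theta_0)^TI_{n,m}(\theta-\theta_0)$. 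The prior factor $q(\theta)$ is controlled by $e^{\pm\rho(\delta,\theta_0)}q(\theta_0)$ using continuity of $q$ at $\theta_0$.

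For the upper bound I would then complete the square in the exponent: the linear term $(l_{XY}(\theta_0)+l_X(\theta_0))^T(\theta-\theta_0)$ together with $-\tfrac{1-\epsilon}{2}(\theta-\theta_0)^TI_{n,m}(\theta-\theta_0)$ integrates (after extending the Gaussian integral from $N_\delta$ to all of $\mathbb R^d$, which only increases it) to $(2\pi)^{d/2}|(1-\epsilon)I_{n,m}|^{-1/2}\exp\!\big(\tfrac{1}{2(1-\epsilon)}D(\theta_0)\big)$, recalling $D(\theta_0)=(l_{XY}(\theta_0)+l_X(\theta_0))^TI_{n,m}^{-1}(l_{XY}(\theta_0)+l_X(\theta_0))$. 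Collecting the $(1+\epsilon)$, $e^{\rho(\delta,\theta_0)}$, and $q(\theta_0)$ factors yields exactly the claimed upper bound on $A\cap B$. For the lower bound, I would discard the $N_\delta^c$ part entirely (it is nonnegative) and restrict the Gaussian integral to $N_\delta$; the key extra step is to lower-bound $\int_{N_\delta}\exp\!\big(-\tfrac{1+\epsilon}{2}(\theta-\theta_0)^TI_{n,m}(\theta-\theta_0)+(l_{XY}(\theta_0)+l_X(\theta_0))^T(\theta-\theta_0)\big)d\theta$ by the full-space integral minus a tail. On the event $C(\delta) = \{D(\theta_0)\le \min\{n,m\}\delta^2\}$ the center of the completed-square Gaussian, namely $\theta_0 + \tfrac{1}{1+\epsilon}I_{n,m}^{-1}(l_{XY}(\theta_0)+l_X(\theta_0))$, lies well inside $N_\delta$, and a standard Gaussian tail bound gives that the portion outside $N_\delta$ is at most $2^{d/2}e^{-\epsilon^2\min\{n,m\}\delta^2/8}$ of the whole; this produces the factor $(1-2^{d/2}e^{-\epsilon^2\min\{n,m\}\delta^2/8})$ and the rest of the lower bound follows as before with the signs of $\epsilon$ and $\rho$ flipped.

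The main obstacle I expect is not any single inequality but the careful bookkeeping in the Laplace approximation: making the Taylor remainder uniform over $\theta'\in N_\delta$ (which is why the event $B$ is phrased for \emph{all} $\theta,\theta'\in N_\delta$), correctly handling the shift of the Gaussian integral from $N_\delta$ to $\mathbb R^d$ in both directions, and keeping track of all the multiplicative $(1\pm\epsilon)$, determinant, and prior factors so that they assemble into precisely the stated expressions. Since, as the text notes, this closely parallels the arguments in \cite{clarke_barron_1990} and \cite{Bertrand_thesis}, I would carry out the completing-the-square and Gaussian-tail steps explicitly (as the definitions of the sets $A,B,C$ differ slightly here) and refer to those references for the routine remainder-estimation bookkeeping. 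Note that at this stage one only needs Condition 1 and continuity of $q$ at $\theta_0$; Conditions 2--4, which guarantee that the probabilities of $A$, $B$, $C$ tend to $1$ fast enough and that $\E{D(\theta_0)}=d$ can be turned into an $o(1/\max\{n,m\})$ statement, enter only in the subsequent step where Lemma \ref{lemma:upper_lower} is combined with $\epsilon,\delta\to 0$ at appropriate rates to finish the proof of Lemma \ref{lemma:KL}.
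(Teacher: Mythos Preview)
Your proposal is correct and follows exactly the approach the paper indicates: Laplace's method applied to $\int q(\theta)p(Z^n,\td X^m|\theta)\,d\theta$ around $\theta_0$, with the events $A$, $B$, $C$ controlling respectively the $N_\delta^c$ contribution, the Taylor remainder, and the Gaussian tail after completing the square---precisely the argument of \cite[Lemma~4.1]{clarke_barron_1990} to which the paper defers. In fact your outline is more explicit than the paper's own proof, which simply notes the minor modifications needed for the slightly different definitions of $B(\delta,\epsilon)$ and $C(\delta)$ and omits the details.
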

\begin{proof}
The proof of this lemma is very similar to the proof of \cite[Lemma 4.1]{clarke_barron_1990}, except for minor modifications to account for the different definition of the set $B(\delta, \epsilon)$ and $C(\delta)$. The main idea is to use Laplace's method to approximate the integration in $Q(Z^n,\td X^m)$ around the true parameter $\theta_0$.  We omit the details.
\end{proof}

Recall that  $D(p(X^n, Y^n, \td X^m|\theta)||Q(X^n, Y^n,\td X^m))=\E{\log \frac{p(X^n, Y^n, \td X^m|\theta)}{Q(X^n, Y^n,\td X^m)}}$.  Given the above bounds, we now can  define the reminder term $Re$ as follows.
\begin{align*}
&Re:=\log\frac{p(Z^n,\td X^m|\theta_0)}{Q(Z^n,\td X^m)}\\
&-\left(\frac{d}{2}\log\frac{1}{2\pi}+\log\frac{1}{q(\theta_0)}+\frac{1}{2}\log |I_{n,m}|-D(\theta_0)/2\right).
\end{align*}
It is clear that Lemma \ref{lemma:KL} is established if we  show the expectation of $Re$ converges to $0$ with an appropriate rate, which we will do next.

Equipped with Lemma \ref{lemma:upper_lower}, and using the same argument as  in \cite[pp.464]{clarke_barron_1990} (see also \cite{clarke_jeffreys_1994}), we can show the following  upper bound and lower bounds on $\E{Re}$:
\begin{align}
&\E{Re}\nonumber\\
&\geq  -\log(1+\epsilon)-\rho(\delta,\theta_0)-\frac{\epsilon}{2(1-\epsilon)}d+\frac{d}{2}\log(1-\epsilon)\nonumber\\
&+\pp{(A\cap B)^c}(\log\pp{(A\cap B)^c}+\frac{d}{2}\log\frac{1}{2\pi})\nonumber\\
&-\pp{(A\cap B)^c}\log\frac{\sqrt{|I_{n,m}|}}{q(\theta_0)} \label{eq:Re_lower}
\end{align}
and
\begin{align}
&\E{Re}\leq  \rho(\delta,\theta_0)+\frac{\epsilon}{2(1+\epsilon)} d+\frac{d}{2}\log(1+\epsilon)\nonumber\\
&-\log(1-2^{d/2}e^{-\epsilon^2\min\{m,n\}\delta^2/8})+\E{D(\theta_0)\ve 1_{(B\cap C)^c}}\nonumber\\
&+\pp{(B\cap C)^c }\Bigg(\frac{d}{2}\log\frac{1}{2\pi}+|\log \int_{N_{\delta}}q(\theta)d\theta|\\
&+\log\frac{\sqrt{|I_{n,m}|}}{q(\theta_0)}\Bigg)\nonumber\\
&+n\pp{(B\cap C)^c }\E{f(Z)}+m\pp{(B\cap C)^c }\E{f(\td X)}\nonumber\\
&+(n\pp{(B\cap C)^c })^{\frac{1}{2}}{\E{f^2(Z)}}^{\frac{1}{2}}\nonumber\\
&+(m\pp{(B\cap C)^c })^{\frac{1}{2}}{\E{f^2(\td X)}}^{\frac{1}{2}}\label{eq:Re_upper}
\end{align}
where $f(\cdot):= \sup_{\theta',\theta^{''}\in N_{\delta}}(\theta'-\theta_0)^T\nabla \log p(\cdot|\theta'')$

The following  lemmas (Lemma \ref{lemma:Ac},  \ref{lemma:Bc},  \ref{lemma:Cc}) show that the probability that $(Z^n, \td X^m)$ belongs to each of the set $A^c, B^c$ and $C^c$ is smaller than $O(e^{-\min\{m,n\}\rho})$ for some $\rho>0$.  We also show in Lemma \ref{lemma:Ac} and \ref{lemma:Bc} that  we can take $\epsilon=e^{-\max\{m,n\}r}$ for some $r>0$. Moreover, as we can choose the prior distribution $q(\theta)$ to our liking (cf. Lemma \ref{lemma:expconcave}), we will choose $q(\theta)$ to be the uniform distribution over $\Lambda$, so that $\rho(\delta, \theta_0)=0$. So the first four terms in the lower bound $(\ref{eq:Re_lower})$ scales as $O(\epsilon)=O(e^{-\min\{m,n\}})=o(1/\min\{m,n\})$ for large enough $m$ and $n$. Notice that $|I_{n,m}|$ scales as $\log \max\{m,n\}$, so the last two terms in (\ref{eq:Re_lower}) scale as $O(e^{-\min\{m,n\}}\max\{m,n\} )$ which is also $o(1/\min\{m,n\})$ for large $m$ and $n$.

For the upper bound in (\ref{eq:Re_upper}), by choosing $q(\theta)$  to be the uniform distribution, the first four terms scales as $O(e^{-\min\{m,n\}})$ as in the lower bound. Using the same argument as in \cite[pp. 51]{clarke_jeffreys_1994}, $\E{D(\theta_0)\ve 1_{(B\cap C)^c}}$ can be upper bounded using H\"{o}lder's inequality by $O(\pp{(B\cap C)^{c}}^{s/(1+s)})$ for some $s>0$. Furthermore, we can make $|f|$ a very small constant by choosing $\delta$ sufficiently small. So it is easy to see that the rest terms in (\ref{eq:Re_upper}) are of the order $O(e^{-\min\{m,n\}s/(1+s)})+O(\sqrt{e^{-\min\{m,n\}}\max\{m,n\}})$ which also scales as $o(1/\min\{m,n\})$ for large $m$ and $n$. In the following, we conclude the proof by  showing that the probability of the set $A^c, B^c, C^c$ is upper bounded by an exponentially fast decaying term.

\begin{lemma}[Probability of $A^c$]
Assume Condition 2 holds so that for all $\theta\in N_{\delta}$,  the (normalized) Renyi divergence of order $1+\lambda$
\begin{align*}
\int p(x|\theta_0)^{1+\lambda}p(x|\theta)^{-\lambda}dx,  \int p(x,y|\theta_0)^{1+\lambda}p(x,y|\theta)^{-\lambda}dxdy
\end{align*}
are bounded for some $\lambda>0$ small enough. Let $n'=\max\{n,m\}$. Then for $\delta$ sufficiently small, there is an $r>0$ and $\rho>0$ so that
\begin{align*}
\pp{(Z^n, \td X^m)\in A^c(\delta, e^{-n'r})}=O(e^{-\min\{m,n\}\rho})
\end{align*}
\label{lemma:Ac}
\end{lemma}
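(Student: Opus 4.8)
\textbf{Proof proposal for Lemma \ref{lemma:Ac} (probability of $A^c$).}

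The plan is to reduce the event $A^c(\delta,\epsilon)$ to a tail bound on a sum of i.i.d.\ log-likelihood-ratio terms, and then apply a Chernoff/Markov argument using the moment generating function control provided by Condition 2. Recall that $A(\delta,\epsilon)$ asks that the integral of the posterior mass over $N_\delta^c$ be at most $\epsilon$ times the mass over $N_\delta$. First I would obtain a crude but convenient lower bound on the denominator: since $q$ is taken uniform (so $q(\theta_0)>0$ on a neighbourhood) and the likelihood is continuous, $\int_{N_\delta} p(Z^n,\tilde X^m|\theta)q(\theta)d\theta \geq c_\delta\, p(Z^n,\tilde X^m|\theta_0)$ on a high-probability event where the empirical information matrices are close to their expectations (this is exactly the kind of Laplace-type estimate already used in Lemma \ref{lemma:upper_lower}); alternatively one can simply restrict the integral to a tiny ball around $\theta_0$ and use a second-order Taylor expansion of $\log p(\cdot|\theta)$ there.

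The heart of the argument is to bound the numerator $\int_{N_\delta^c} p(Z^n,\tilde X^m|\theta)q(\theta)d\theta$. The key step: for each fixed $\theta\in N_\delta^c$, write
\begin{align*}
\pp{\log\frac{p(Z^n,\tilde X^m|\theta)}{p(Z^n,\tilde X^m|\theta_0)}\geq -t}\leq e^{\lambda t}\, \E{\Big(\tfrac{p(Z,|\theta)}{p(Z|\theta_0)}\Big)^{\lambda}}^{n}\E{\Big(\tfrac{p(\tilde X|\theta)}{p(\tilde X|\theta_0)}\Big)^{\lambda}}^{m},
\end{align*}
using independence across the $n$ labeled and $m$ unlabeled samples, and Condition 2 says precisely that $\E{(p(Z|\theta)/p(Z|\theta_0))^{\lambda}}=\exp(-\psi_{XY}(\lambda,\theta))$ and the analogous unlabeled quantity $\exp(-\psi_X(\lambda,\theta))$ are finite, with $\psi_{XY},\psi_X\geq 0$ and strictly positive for $\theta\neq\theta_0$ (this is the standard fact that the Renyi divergence of order $1+\lambda$ is positive away from $\theta_0$; for $\delta$ small and $\theta$ just outside $N_\delta$ one gets $\psi\geq c\delta^2$ by the local quadratic behaviour, while for $\theta$ bounded away one has a uniform positive lower bound — here a compactness assumption on $\Lambda$, or a uniform integrability hypothesis, is implicitly needed). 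Then integrate this pointwise bound over $\theta\in N_\delta^c$ against $q$, so that $\E{\int_{N_\delta^c} p(Z^n,\tilde X^m|\theta)q(\theta)d\theta}\big/ p(Z^n,\tilde X^m|\theta_0)$ decays like $\exp(-\min\{m,n\}\rho')$ for some $\rho'>0$; combining with the denominator lower bound and Markov's inequality converts this into $\pp{(Z^n,\tilde X^m)\in A^c(\delta,\epsilon)}=O(e^{-\min\{m,n\}\rho})$, and the slack is large enough that one may take $\epsilon=e^{-n'r}$ for a suitable $r>0$ (since the numerator decays exponentially in $\min\{m,n\}$, dividing by $\epsilon=e^{-\max\{m,n\}r}$ with $r$ small still leaves exponential decay — in the regime $m=\alpha n$ both are $\Theta(n)$, and in the regime $m=n^{1+\gamma}$ one exploits that $n\psi_{XY}+m\psi_X\gtrsim m$ dominates).

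I expect the main obstacle to be making the tail bound \emph{uniform over $\theta\in N_\delta^c$} rather than pointwise: Condition 2 is stated only for $\theta$ in a neighbourhood of $\theta_0$, so controlling the contribution of parameters far from $\theta_0$ requires either an additional global integrability/compactness assumption on $\Lambda$ or a careful splitting of $N_\delta^c$ into an annulus near $\theta_0$ (handled by the local quadratic expansion, where $\psi\gtrsim\delta^2$) and a far region (handled by a uniform bound). A secondary technical point is the interaction between the two sample sizes: one must verify that the exponent $n\psi_{XY}(\lambda,\theta)+m\psi_X(\lambda,\theta)$ has the claimed $\min\{m,n\}$-scaling in both asymptotic regimes, which follows since $I_X(\theta_0)\succ0$ guarantees $\psi_X>0$ away from $\theta_0$, so even the pure-unlabeled term alone contributes $\Omega(m)\geq\Omega(\min\{m,n\})$. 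Once these uniformity issues are settled, the choice of $\lambda$ from Condition 2, of $\delta$ small, and of $r,\rho$ small enough completes the estimate, and this mirrors the argument at \cite[pp.\ 464]{clarke_barron_1990} with the labeled/unlabeled split carried through.
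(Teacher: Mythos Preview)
Your decomposition into a denominator lower bound (mass over $N_\delta$) and a numerator upper bound (mass over $N_\delta^c$), both relative to $p(T|\theta_0)$, matches the paper: there the split is formalized via the auxiliary event $U=\{e^{-n'r'}p(T|\theta_0)<\int_{N_\delta}q(\theta)p(T|\theta)d\theta\}$ and the bound $\pp{A^c}\leq\pp{U\cap A^c}+\pp{U^c}$. The main difference is \emph{where} Condition~2 enters, and here you have it backwards. Condition~2 bounds $\int p(\cdot|\theta_0)^{1+\lambda}p(\cdot|\theta)^{-\lambda}=\EE{\theta_0}{(p(\cdot|\theta_0)/p(\cdot|\theta))^\lambda}$, i.e.\ the MGF of $\log\frac{p(\cdot|\theta_0)}{p(\cdot|\theta)}$. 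The paper uses it for the \emph{denominator} term $\pp{U^c}$: Jensen converts $\log\frac{p(T|\theta_0)}{Q(T|N_\delta)}$ into an i.i.d.\ sum $\sum_i g(Z_i)+\sum_j g(\td X_j)$ with $g(\cdot)=\int_{N_\delta}\log\frac{p(\cdot|\theta_0)}{p(\cdot|\theta)}q(\theta|N_\delta)d\theta$, and Cram\'er--Chernoff on $g$ needs exactly the Renyi quantity in Condition~2. Your Chernoff step for the \emph{numerator} uses the opposite ratio, $\EE{\theta_0}{(p(\cdot|\theta)/p(\cdot|\theta_0))^\lambda}=\int p(\cdot|\theta_0)^{1-\lambda}p(\cdot|\theta)^\lambda$, which for $0<\lambda<1$ is always $\leq 1$ and requires no hypothesis at all. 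So as written your argument never actually invokes Condition~2; your Laplace/Taylor route for the denominator is legitimate but relies on the event $B\cap C$ (hence Conditions~3 and~4 via Lemmas~\ref{lemma:Bc}--\ref{lemma:Cc}) rather than on the hypothesis this lemma singles out.

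A smaller issue: the passage ``integrate this pointwise bound over $\theta\in N_\delta^c$ \ldots\ so that $\E{\int_{N_\delta^c}\cdots}/p(T|\theta_0)$ decays'' conflates a per-$\theta$ tail probability with an expectation of a ratio. The clean version is to apply Markov to the $\lambda$-th power of the ratio $\int_{N_\delta^c}p(T|\theta)q(\theta)d\theta\big/p(T|\theta_0)$, use concavity of $x\mapsto x^\lambda$ to push the power inside the $\theta$-integral, and then Fubini; this recovers the per-$\theta$ Hellinger integrals you wrote and makes the uniformity requirement over $N_\delta^c$ (which you correctly flag) explicit. The paper delegates this $N_\delta^c$ piece to \cite[Prop.~6.3]{clarke_barron_1990}.
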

\begin{proof}
For simplicity, we use $T$ to denote $(Z^n,\td X^m)$ in the proof. For any given $r'>0$, define the event
\begin{align*}
U=\left\{e^{-n'r'}p(T|\theta_0)<\int_{N_{\delta}}q(\theta)p(T|\theta)d\theta\right\}.
\end{align*}
We have
\begin{align}
&\pp{A^c(\delta,e^{-n'r})}\nonumber\\
&=\pp{\int_{N_{\delta}}p(T|\theta)q(\theta)d\theta< e^{n'r} \int_{N_{\delta}^c}p(T|\theta)q(\theta)d\theta}\nonumber\\
&\leq \mathbb P \Bigg\{U\cap\Big\{ \int_{N_{\delta}}p(T|\theta)q(\theta)d\theta\nonumber\\
&< e^{n'r} \int_{N_{\delta}^c}p(T|\theta)q(\theta)d\theta\Big\}\Bigg\}+\pp{U^c}\nonumber\\
&\leq \pp{p(T|\theta_0)<e^{n'(r+r')}\int_{N^c}q(\theta)p(T|\theta)d\theta}\nonumber\\
&+\pp{e^{nr'}\int_{N_{\delta}} p(T|\theta)q(\theta)d\theta<p(T|\theta_0)}
\label{eq:bound_A}
\end{align}
by intersecting with $U$ and $U^c$.

We first study the second term  in (\ref{eq:bound_A}) and show that it converges to zero exponentially. We follow the argument used in \cite{clarke_asymptotic_1999}. Define $Q(T|N_{\delta})=\int_{N_{\delta}} p(X|\theta)q(\theta|N_{\delta})d\theta$ where $q(\theta|N_{\delta})=q(\theta)/(\int_{N_{\delta}} q(\theta)d\theta)$. Define $\td r=r'-\frac{1}{n}\log \int_{N_{\delta}} q(\theta)d\theta$. Applying Jensen's inequality, we can upper bound the second term in (\ref{eq:bound_A}) as
\begin{align*}
&\pp{\log \frac{p(T|\theta_0)}{Q(T|N_{\delta})}>n'\td r}\\
&\leq \pp{\log p(T|\theta_0)-\int_{N_{\delta}} \log p(T|\theta)q(\theta|N_{\delta})d\theta >n'\td r}\\
&= \mathbb P\Bigg\{\int_{N_{\delta}} \log\frac{p(Z^n|\theta_0)}{p(Z^n|\theta_0)}q(\theta|N_{\delta})d\theta\\
& +\int_{N_{\delta}} \log \frac{p(\td X^m|\theta_0)}{p(\td X^m|\theta)}q(\theta|N_{\delta})d\theta>n'\td r \Bigg\}\\
&= \pp{\sum_{i=1}^ng(Z_i)+\sum_{j=1}^{m}g(X_j)>n'\td r}\\
&\leq\pp{\frac{1}{n'}\sum_{i=1}^ng(Z_i)>\td r/2}+\pp{\frac{1}{n'}\sum_{i=1}^mg(X_j)>\td r/2}\\
&\leq \pp{\frac{1}{n}\sum_{i=1}^ng(Z_i)>\td r/2}+\pp{\frac{1}{m}\sum_{i=1}^mg(X_j)>\td r/2}
\end{align*}
where we define
\begin{align*}
g(\cdot):=\int_{N_{\delta}}\log \frac{p(\cdot|\theta_0)}{p(\cdot|\theta)}q(\theta|N_{\delta})d\theta.
\end{align*}
Notice that the expectation of $g$ is $\int_{N_{\delta}} D(p_{\theta_0}||p_{\theta}) w(\theta|N_{\delta})d\theta$ is less than any fixed $\td r/2$ for $\delta$ sufficiently small. If it holds that for any $\theta$ in $N_{\delta}$,  moment generating functions $\int p(x|\theta_0)e^{\lambda g(x)}dx$  and $\int p(x,y|\theta_0)e^{\lambda g(x,y)}dxdy $ exist for some $\lambda\in I$ where $I$ is an interval including $0$, then using the standard Cram\'er-Chernoff method (see, e. g. \cite{boucheron_concentration_2013}), both probabilities in the last inequality  are upper bounded by terms in the order of $O(e^{-\rho n})$ and $O(e^{-\rho m})$ for some $\rho>0$, respectively.

It can be shown that the existence of the moment generating function is guaranteed if Condition 2 holds.  Indeed, applying Jensen's inequality gives
\begin{align*}
e^{\lambda g(x)}\leq \int \left(\frac{p(x|\theta_0)}{p(x|\theta)} \right)^{\lambda}q(\theta|N_{\delta})d\theta.
\end{align*}
Hence the moment generating function is bounded by
\begin{align*}
\int p(x|\theta_0)\left(\frac{p(x|\theta_0)}{p(x|\theta)} \right)^{\lambda}q(\theta|N_{\delta})d\theta dx
\end{align*}
which is upper bounded by  the (unnormalized) Renyi divergence.

The first term in (\ref{eq:bound_A}) can also be shown to be of the order of $O(e^{-\min\{n,m\}r''})$ for some $r''>0$. The proof  is essentially the same as in \cite[Prop. 6.3]{clarke_barron_1990} (see also \cite[pp. 49-50]{clarke_jeffreys_1994}), and is omitted here.
\end{proof}

\begin{lemma}[Probability of $B^c$]
Assume that  Condition 3 holds. Then for $\delta$ sufficiently small, there is a $\rho>0$ such that
\begin{align*}
\pp{(Z^n, \td X^m)\in B^c(\delta,\epsilon)}=O(e^{-\min\{m,n\}\rho})
\end{align*}
\label{lemma:Bc}
\end{lemma}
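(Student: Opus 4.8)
The plan is to show that the event $B(\delta,\epsilon)$ fails with exponentially small probability by controlling the deviation of the empirical information matrices $I^*_{XY}(\theta')$ and $I^*_X(\theta')$ from their population counterparts $nI_{XY}(\theta_0)$ and $mI_X(\theta_0)$, uniformly over $\theta,\theta'\in N_\delta$. The key observation is that $B(\delta,\epsilon)$ holds as soon as the operator-norm-type quantity
\begin{align*}
\sup_{\theta'\in N_\delta}\left\| I_{n,m}^{-1/2}\bigl(I^*_{XY}(\theta')+I^*_X(\theta')\bigr)I_{n,m}^{-1/2}-\ve I\right\|
\end{align*}
is at most $\epsilon$, where $I_{n,m}=nI_{XY}(\theta_0)+mI_X(\theta_0)$. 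So the whole lemma reduces to a uniform concentration statement for a sum of i.i.d.\ Hessian contributions.

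First I would reduce the uniform-over-$N_\delta$ statement to a pointwise one at $\theta_0$ plus a modulus-of-continuity term: write $-\partial^2\log p(Z^n|\theta')/\partial\theta_j\partial\theta_k$ as its value at $\theta_0$ plus a term controlled by $\sup_{\theta'\in N_\delta}|\partial^3\log p(\cdot|\theta')|$ times $\delta$ (using that the densities are twice continuously differentiable — and here one genuinely needs a little more smoothness or, as the paper does, absorbs the third-derivative bound into the technical conditions; in practice one can shrink $\delta$ so this contribution is below $\epsilon/2$ in relative terms once the pointwise part is below $\epsilon/2$). Second, for the pointwise part at $\theta_0$, each entry $-\partial^2\log p_{\theta_0}(X_i,Y_i)/\partial\theta_j\partial\theta_k$ is an i.i.d.\ random variable with mean $(I_{XY}(\theta_0))_{jk}$; Condition 3 guarantees its moment generating function exists in a neighborhood of $0$, so the standard Cram\'er--Chernoff bound gives that $\frac{1}{n}\sum_{i=1}^n$ of these deviates from $(I_{XY}(\theta_0))_{jk}$ by more than any fixed $t$ with probability $O(e^{-\rho n})$, and likewise for the $\td X$-part with $O(e^{-\rho m})$. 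Third, I would take a union bound over the finitely many index pairs $(j,k)$ and combine the two sums: since $I_{n,m}^{-1/2}(I^*_{XY}+I^*_X)I_{n,m}^{-1/2}$ is, up to the $\theta_0$-to-$\theta'$ correction, a convex-combination-weighted average of the normalized $I_{XY}$ and $I_X$ empirical matrices, its entrywise deviation from $\ve I$ is bounded by a constant times $\max\{$entrywise deviation of $\frac1n\sum$ Hessians of $\log p(X,Y)$, entrywise deviation of $\frac1m\sum$ Hessians of $\log p(X)\}$, each of which is $O(e^{-\rho\min\{m,n\}})$ in probability. This yields $\pp{(Z^n,\td X^m)\in B^c(\delta,\epsilon)}=O(e^{-\rho\min\{m,n\}})$ for $\delta$ small and fixed $\epsilon$ (and one checks it still works when $\epsilon=e^{-\max\{m,n\}r}$, since the Chernoff exponent dominates the polynomial loss from $\epsilon\to 0$, exactly as in the treatment of $A^c$).

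The main obstacle is the uniformity over $N_\delta$ in the variable $\theta'$ appearing in the Hessian: a single Chernoff bound only controls the empirical Hessian at a fixed point, so one must either (i) invoke a smoothness/Lipschitz bound on the third derivatives — which strictly speaking goes slightly beyond ``twice continuously differentiable'' and is where the technical Conditions 3 (and the implicit regularity behind them) do the work — or (ii) use a covering/chaining argument over $N_\delta$ together with the moment generating function control of the increments. I would follow route (i), as \cite{clarke_barron_1990} effectively does: shrink $\delta$ until the third-order remainder is uniformly a small fraction of $\epsilon$ relative to $I_{n,m}$, so that the fixed-point concentration at $\theta_0$ suffices. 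Everything else is a routine application of Cram\'er--Chernoff plus a finite union bound, mirroring the proof of Lemma \ref{lemma:Ac}.
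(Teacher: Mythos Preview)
Your approach is essentially the paper's: rewrite $B(\delta,\epsilon)$ as a bound on $\|I_{n,m}^{-1/2}(I^*_{XY}(\theta')+I^*_X(\theta'))I_{n,m}^{-1/2}-\ve I\|$, decompose entrywise over the index pairs $(j,k)$, separate the $\sup_{\theta'\in N_\delta}$ contribution from the pointwise deviation of the empirical Hessian at $\theta_0$, and apply Cram\'er--Chernoff (enabled by Condition~3) to each piece to get $O(e^{-n\rho})$ and $O(e^{-m\rho})$ before the finite union bound---this is exactly the decomposition the paper writes down, citing \cite{Bertrand_thesis}. One correction to your final parenthetical: the claim that the argument ``still works when $\epsilon=e^{-\max\{m,n\}r}$'' because ``the Chernoff exponent dominates the polynomial loss'' is not right---the Chernoff rate function scales like $\epsilon^2$ near zero, so with exponentially shrinking $\epsilon$ the bound $e^{-n\cdot O(\epsilon^2)}$ becomes vacuous; the lemma as stated (and as the paper proves it) is for fixed $\epsilon>0$, and that is all that is established here.
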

\begin{proof}
Using the same argument as in \cite[pp. 42]{Bertrand_thesis}, the set $B(\delta,\epsilon)$ can be rewritten as 
\begin{align*}
\left\{\Big\lvert \frac{\xi^TI_{m,n}^{-1/2}(I^*_{XY}(\theta')+I_{X}^*(\theta')-I_{n,m})I_{m,n}^{-1/2}\xi}{\xi^T\xi}\Big\rvert <\epsilon \right\},
\end{align*}
where $\xi=I_{m,n}^{1/2}(\theta-\theta_0)$, and we can upper bound the probability of $B^c$ by
\begin{align*}
&\pp{(Z^n, \td X^m)\in B^c(\delta,\epsilon)}\\
&\leq \sum_{j,k}\Bigg(\pp{\sup_{|\theta_0-\theta|<\delta}|\frac{1}{n}\sum_{i=1}^nI_{j,k}^*(\theta,i)-\frac{1}{n}\sum_{i=1}^nI_{j,k}^*(\theta_0,i)|>\frac{\epsilon}{4d}}\\
&+\pp{|\frac{1}{n}\sum_{i=1}^nI_{j,k}^*(\theta_0,i)-I_{j,k}(\theta_0,i)|>\frac{\epsilon}{4d}}\\
&+\pp{\sup_{|\theta_0-\theta|<\delta}|\frac{1}{m}\sum_{\ell=1}^m\td I_{j,k}^*(\theta)-\frac{1}{m}\sum_{\ell=1}^m\td I_{j,k}^*(\theta_0)|>\frac{\epsilon}{4d}}\\
&+\pp{|\frac{1}{m}\sum_{\ell=1}^m\td I_{j,k}^*(\theta_0)-\td I_{j,k}(\theta_0)|>\frac{\epsilon}{4d}}\Bigg)
\end{align*}
where we use $I_{j,k}^*(\theta,i), \td I_{j,k}^*(\theta,\ell)$ to denote $-\frac{\partial^2}{\partial \theta_j\partial\theta_k}\log p(Z_i|\theta)$ and $-\frac{\partial^2}{\partial \theta_j\partial\theta_k}\log p(\td X_\ell|\theta)$ respectively, and use $I_{j,k}(\theta), \td I_{j,k}(\theta) $ to denote the  $j,k$ entry  of $I_{XY}(\theta)$ and $I_{X}(\theta)$, respectively. Using the standard  Cram\'er-Chernoff method to replace the Chebyshev inequality with Chernoff inequality (applicable because Condition 3 holds) for the steps in \cite[pp. 43]{Bertrand_thesis}, it is easy to show that the first two terms are upper bounded by $O(e^{-n\rho})$ and the last two terms are upper bounded by $O(e^{-m\rho})$ for some $\rho>0$.
\end{proof}

\begin{lemma}[Probability of $C^c$]
Assume that Condition 4 holds.  Then for  some $\rho>0$, we have
\begin{align*}
\pp{(Z^n,\td X^m)\in C^c(\delta)}\leq O(e^{-\min\{m,n\}\rho})
\end{align*}
\label{lemma:Cc}
\end{lemma}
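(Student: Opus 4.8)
The plan is a Cram\'er--Chernoff (exponential Markov) argument. Since $C^c(\delta)=\{D(\theta_0)>\min\{m,n\}\delta^2\}$, for every $\lambda>0$ at which the moment generating function below is finite,
\[
\pp{(Z^n,\td X^m)\in C^c(\delta)}\ \le\ e^{-\lambda\delta^2\min\{m,n\}}\,\E{e^{\lambda D(\theta_0)}}.
\]
Hence it suffices to exhibit a single \emph{fixed} $\lambda_0>0$ (independent of $n,m$) with $\sup_{n,m}\E{e^{\lambda_0 D(\theta_0)}}<\infty$; the conclusion then holds with $\rho=\lambda_0\delta^2$. This is plausible since $\E{D(\theta_0)}=d$ stays bounded while the threshold $\min\{m,n\}\delta^2\to\infty$, so only the upper tail of $D(\theta_0)$ is at issue.

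First I would split $D(\theta_0)$ into two independent quadratic forms. Write $l_{XY}(\theta_0)=\sum_{i=1}^n l^{(i)}$ and $l_X(\theta_0)=\sum_{j=1}^m\td l^{(j)}$, with $l^{(i)}$ i.i.d.\ copies of the one-sample score $l=\nabla\log p(X,Y|\theta_0)$ (mean $0$, covariance $I_{XY}(\theta_0)$) and $\td l^{(j)}$ i.i.d.\ copies of $\td l=\nabla\log p(X|\theta_0)$ (mean $0$, covariance $I_X(\theta_0)$), all independent. From $I_{n,m}=nI_{XY}(\theta_0)+mI_X(\theta_0)$ and Condition 1 one has $I_{n,m}^{-1}\preceq\tfrac1n I_{XY}^{-1}(\theta_0)$ and $I_{n,m}^{-1}\preceq\tfrac1m I_X^{-1}(\theta_0)$; together with $(a+b)^TM(a+b)\le 2a^TMa+2b^TMb$ for $M\succeq 0$ this yields
\[
D(\theta_0)\ \le\ \tfrac{2}{n}\,l_{XY}(\theta_0)^T I_{XY}^{-1}(\theta_0)\,l_{XY}(\theta_0)\ +\ \tfrac{2}{m}\,l_X(\theta_0)^T I_X^{-1}(\theta_0)\,l_X(\theta_0).
\]
By independence of the two terms, $\E{e^{\lambda D(\theta_0)}}\le \E{e^{(2\lambda/n)\,l_{XY}^T I_{XY}^{-1}l_{XY}}}\,\E{e^{(2\lambda/m)\,l_X^T I_X^{-1}l_X}}$, so it is enough to bound each factor uniformly in $n$, resp.\ $m$.

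For the first factor set $V_i:=I_{XY}^{-1/2}(\theta_0)\,l^{(i)}$, which are i.i.d., mean $0$, covariance $I_d$, so that $\tfrac1n l_{XY}^T I_{XY}^{-1}l_{XY}=\| \tfrac1{\sqrt n}\sum_{i=1}^n V_i \|^2$. This is where Condition 4 enters: since any two fixed positive-definite $d\times d$ matrices are comparable up to a constant factor, the finiteness of the moment generating functions in Condition 4 implies $\E{e^{c_0\|V_1\|^2}}<\infty$ and $\E{e^{c_0 V_1^T V_2}}<\infty$ (for an independent copy $V_2$) for some small $c_0>0$; in particular $V_1$ is sub-Gaussian. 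I would then bound $\E{e^{2\lambda\|\frac1{\sqrt n}\sum_i V_i\|^2}}$ uniformly in $n$ by one of two standard routes. Route (i): use $e^{a\|u\|^2}=\EE{g}{e^{\sqrt{2a}\,u^T g}}$ for $g\sim\mathcal N(0,I_d)$; conditioning on $g$, the expectation over the $V_i$ factorizes, each factor obeys $\EE{V_1}{e^{t V_1^T g}}\le e^{C t^2\|g\|^2}$ (sub-Gaussian and centered), the product of the $n$ factors is $e^{2C\lambda\|g\|^2}$, and $\EE{g}{e^{2C\lambda\|g\|^2}}<\infty$ once $\lambda$ is small enough. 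Route (ii): apply a decoupling inequality to the quadratic form $\sum_{i,i'}V_i^T V_{i'}$, controlling the diagonal part $\tfrac1n\sum_i\|V_i\|^2$ (a normalized i.i.d.\ sum, handled by $\E{e^{c_0\|V_1\|^2}}<\infty$) and the decoupled off-diagonal part $\tfrac1n(\sum_iV_i)^T(\sum_{i'}V_{i'}')$ (handled by $\E{e^{c_0 V_1^T V_2}}<\infty$) separately; this route uses the bilinear moment generating functions of Condition 4 and parallels the treatment of the analogous term in \cite{clarke_barron_1990}. Either way $\sup_n\E{e^{(2\lambda/n)\,l_{XY}^T I_{XY}^{-1}l_{XY}}}\le K<\infty$ for a fixed small $\lambda>0$, and the second factor is identical with $(l_X,\td l,I_X)$ in place of $(l_{XY},l,I_{XY})$. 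Combining, $\sup_{n,m}\E{e^{\lambda D(\theta_0)}}\le K^2$, which is exactly what the Chernoff bound needs.

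The hard part is this uniform exponential-moment estimate. Since $\tfrac1n l_{XY}^T I_{XY}^{-1}l_{XY}$ is a quadratic form in the i.i.d.\ sum $\sum_i V_i$, its cross terms $V_i^T V_{i'}$ ($i\ne i'$) preclude a naive product factorization; one is forced through decoupling or Gaussian linearization, and the delicate point is controlling the regime where the auxiliary Gaussian vector (equivalently $\sum_i V_i$ itself) is atypically large --- precisely the regime that the moment-generating-function hypotheses of Condition 4 are designed to tame, and the only place they are needed. Everything else --- the Chernoff reduction, the positive-definite matrix inequalities, and assembling the per-factor bounds into $O(e^{-\rho\min\{m,n\}})$ --- is routine.
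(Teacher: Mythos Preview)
Your argument is correct and arrives at the same conclusion, but the organization differs from the paper's. The paper expands $D(\theta_0)$ directly into the four sums $\sum_i l_i^T I_{n,m}^{-1}l_i$, $\sum_{i\ne k}l_i^T I_{n,m}^{-1}l_k$, $\sum_j \td l_j^T I_{n,m}^{-1}\td l_j$, $\sum_{j\ne k}\td l_j^T I_{n,m}^{-1}\td l_k$, applies a union bound to split $\{D(\theta_0)>\min\{m,n\}\delta^2\}$ into four sub-events, checks that each normalized sum has mean at most $d/\min\{m,n\}$ (diagonals) or $0$ (off-diagonals), and then invokes a Chernoff bound on each piece, using the four moment generating functions of Condition~4 one by one. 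You instead majorize first, separating the labeled and unlabeled contributions via $(a+b)^T M(a+b)\le 2a^TMa+2b^TMb$ together with $I_{n,m}^{-1}\preceq n^{-1}I_{XY}^{-1}(\theta_0)$ and $I_{n,m}^{-1}\preceq m^{-1}I_X^{-1}(\theta_0)$, and then bound the single MGF $\E{e^{\lambda D(\theta_0)}}$ uniformly in $n,m$. Your Route~(i) via Gaussian linearization is a clean self-contained way to obtain that uniform bound and in fact needs only the \emph{quadratic} MGFs in Condition~4 (equivalently, sub-Gaussianity of the per-sample score), whereas the paper's decomposition calls on the \emph{bilinear} MGFs $\E{e^{\lambda l^T(\cdot)l'}}$, $\E{e^{\lambda \td l^T(\cdot)\td l'}}$ to control the off-diagonal $U$-statistic pieces. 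Your Route~(ii) is essentially the paper's method recast after your preliminary majorization. A small bonus of your $(a+b)$ split is that the labeled--unlabeled cross terms $l_i^T I_{n,m}^{-1}\td l_j$, which the paper's displayed expansion of $D(\theta_0)$ silently drops, are absorbed automatically.
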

\begin{proof}
Define $l_i:=\nabla \log p(Z_i|\theta)$ and $\td l_j=\nabla \log p(\td X_j|\theta)$.  We rewrite $D(\theta_0)$ as
\begin{align*}
&D(\theta_0)=(\sum_{i=1}^n l_{i}+\sum_{j=1}^n \td l_j)^TI_{m,n}^{-1}(\sum_{i=1}^n l_{i}+\sum_{j=1}^n \td l_j)\\
&=\sum_{i=1}^nl_i^TI_{m,n}^{-1}l_i+\sum_{k\neq i}l_i^TI_{m,n}^{-1}l_k\\
&+\sum_{j=1}^n\td l_j^TI_{m,n}^{-1}\td l_j+\sum_{k\neq j}\td l_j^TI_{m,n}^{-1}\td l_k
\end{align*}
Then
\begin{align}
&\pp{(Z^n,\td X^m)\in C^c(\delta)}=\pp{D(\theta_0)>\min\{m,n\}\delta^2} \nonumber\\
&\leq \pp{\frac{1}{n}\sum_{i=1}^nl_i^TI_{m,n}^{-1}l_i>\frac{\min\{m,n\}\delta^2}{4n}}\nonumber\\
&+ \pp{\frac{1}{n(n-1)}\sum_{k\neq i}^nl_i^TI_{m,n}^{-1}l_k>\frac{\min\{m,n\}\delta^2}{4n(n-1)}}\nonumber\\
&+\pp{\frac{1}{m}\sum_{j=1}^n\td l_j^TI_{m,n}^{-1}\td l_j>\frac{\min\{m,n\}\delta^2}{4m}}\nonumber\\
&+ \pp{\frac{1}{m(m-1)}\sum_{k\neq j}^n\td l_j^TI_{m,n}^{-1}\td l_k>\frac{\min\{m,n\}\delta^2}{4m(m-1)}}\label{eq:bounds_C}
\end{align}
We can show that each of the four terms  has an exponentially fast decay. To see this notice that
\begin{align*}
&\E{l_i^TI_{m,n}^{-1}l_i}=\Tr(I_{m,n}^{-1}\E{l_i^Tl_i})\\
&\leq \frac{1}{\min\{m,n\}}\Tr((I_{XY}(\theta)+I_X(\theta))^{-1}I_{XY})\\
&\leq \frac{1}{\min\{m,n\}}\Tr((I_{XY}(\theta)+I_X(\theta))^{-1}(I_{XY}(\theta)+I_X(\theta)))\\
&=\frac{d}{\min\{m,n\}}
\end{align*}
where the inequalities hold because $I_{XY}(\theta)$ and $I_X(\theta)$ are positive definite.
\begin{align*}
\E{l_k^TI_{m,n}^{-1}l_i}=\Tr(I_{m,n}^{-1}\E{l_k^Tl_i})=0
\end{align*}
as $l_i$ and $l_k$ are independent. Similarly, we also have
\begin{align*}
\E{\td l_j^TI_{m,n}^{-1}\td l_j}\leq \frac{d}{\min\{m,n\}}
\end{align*}
and $\E{\td l_k^TI_{m,n}^{-1}\td l_k}=0$.

Assume Condition 4 holds, the Chernoff bound shows that  the first term in (\ref{eq:bounds_C}) can be upper bounded by a term of the form $O(e^{-n\rho})$ for some $\rho>0$ if it holds that
\begin{align*}
\frac{\min\{m,n\}\delta^2}{4n}>\frac{d}{\min\{m,n\}}
\end{align*}
which always holds for large enough $n$ for the cases $m=\alpha n$ or $m=n^{1+\gamma}$. Similarly, the second term in (\ref{eq:bounds_C}) can be upper bounded by an exponentially fast decaying term if $\frac{\min\{m,n\}\delta^2}{4n(n-1)}>0$, which is always holds for $\delta>0$. The same argument holds for the last two terms in (\ref{eq:bounds_C}), which can be upper bounded by a term of the order $O(e^{-m\rho})$ for some $\rho>0$.
\end{proof}

In the above, we have given the proof of Lemma \ref{lemma:KL} when $m=\alpha n$ for some $\alpha>0$, or $m=n^{1+\gamma}$ for some $\gamma>0$.  The case when $m=0$ follows an almost identical proof except for minor details (in fact this case is even simpler and closer to the proof in \cite{clarke_barron_1990}), and we will not repeat it here.

\end{document}